\typeout{Block Asynchronous ADMM}

\documentclass{article}

\pdfpagewidth=8.5in
\pdfpageheight=11in
\usepackage{ijcai18}
\usepackage{times}
\usepackage{xcolor}
\usepackage{soul}
\usepackage[utf8]{inputenc}
\usepackage[small]{caption}
\usepackage{hyperref}

\usepackage{amsmath}
\usepackage{amsthm}
\usepackage{amssymb}
\usepackage{cases}
\usepackage{proof}
\usepackage{textcomp}

\usepackage{algorithm}
\usepackage{algpseudocode}
\makeatletter
\def\BState{\State\hskip-\ALG@thistlm}
\makeatother

\usepackage{color}
\newcommand{\red}[1]{{#1}}

\usepackage{enumerate}


\usepackage{bm, bbm}

\usepackage{subfigure}

\usepackage{booktabs}

\usepackage{multicol}

\usepackage{pgf,tikz}
\usetikzlibrary{shapes,automata,snakes,backgrounds,arrows}
\usetikzlibrary{mindmap}

\def\squareforqed{\hbox{\rlap{$\sqcap$}$\sqcup$}}
\def\qed{\ifmmode\squareforqed\else{\unskip\nobreak\hfil
\penalty50\hskip1em\null\nobreak\hfil\squareforqed
\parfillskip=0pt\finalhyphendemerits=0\endgraf}\fi}


\newtheorem{thm}{Theorem}

\newtheorem{lem}{Lemma}
\newtheorem{cor}[thm]{Corollary}

\newtheorem{asmp}{Assumption}


\newcommand{\real}[1]{\mathbb{R}^{{#1}}}
\newcommand{\innerprod}[1]{\langle {#1} \rangle}
\newcommand{\norm}[1]{\lVert {#1} \rVert}

\newcommand{\prox}[2][{}]{\mathrm{prox} {}^{#1}_{#2}}


\hyphenation{net-works}

\title{A Block-wise, Asynchronous and Distributed ADMM Algorithm \\for General Form Consensus Optimization}

\begin{document}

\author{
Rui Zhu$^1$, Di Niu$^1$, Zongpeng Li$^2$\\
${}^1$ Department of Electrical and Computer Engineering, University of Alberta\\
${}^2$ School of Computer Science, Wuhan University\\
 \{\texttt{rzhu3, dniu}\}\texttt{@ualberta.ca}, \texttt{zongpeng@whu.edu.cn} 
}

\maketitle



\begin{abstract}
Many machine learning models, including those with non-smooth regularizers, can be formulated as consensus optimization problems, which can be solved by the alternating direction method of multipliers (ADMM). Many recent efforts have been made to develop asynchronous distributed ADMM to handle large amounts of training data.
However, all existing asynchronous distributed ADMM methods are based on full model updates and  require locking all global model parameters to handle concurrency, which essentially serializes the updates from different workers. In this paper, we present a novel block-wise, asynchronous and distributed ADMM algorithm, which allows different blocks of model parameters to be updated in parallel. The lock-free block-wise algorithm may greatly speedup sparse optimization problems, a common scenario in reality, in which most model updates only modify a subset of all decision variables. We theoretically prove the convergence of our proposed algorithm to stationary points for non-convex general form consensus problems with possibly non-smooth regularizers.
We implement the proposed ADMM algorithm on the Parameter Server framework and demonstrate its convergence and near-linear speedup performance as the number of workers increases.
\end{abstract}


\section{Introduction}
\label{sec:intro}



The need to scale up machine learning in the presence of sheer volume of data 
has spurred recent interest in developing efficient distributed optimization algorithms. 
Distributed machine learning jobs often involve solving a non-convex, decomposable, and regularized optimization problem of the following form:
\begin{equation}
\begin{split}
	\mathop{\min}_{\mathbf{x}} &\quad \sum_{i=1}^N f_i(x_1,\ldots,x_{M}) + \sum_{j=1}^M h_j(x_j),  \\
	\mathrm{s.t.}&\quad x_j \in \mathcal{X}_j, j=1,\ldots,M
\end{split}
\label{eq:original}
\end{equation}
where each $f_i:\mathcal{X} \to \real{}$ is a smooth but possibly \emph{non-convex} function, fitting the model $\mathbf{x} :=(x_1,\ldots,x_M)$ to local training data available on node $i$; each $\mathcal{X}_j$ is a closed, convex, and compact set; and the regularizer $h(\mathbf{x}):=\sum_{j=1}^M h_j(x_j)$ is a separable, convex but possibly \emph{non-smooth} regularization term to prevent overfitting. 
Example problems of this type can be found in deep learning with regularization \cite{dean2012large,chen2015mxnet}, robust matrix completion \cite{recht2011hogwild}, LASSO \cite{tibshirani2005sparsity}, sparse logistic regression \cite{liu2009large}, and sparse support vector machine (SVM) \cite{friedman2001elements}.


To date, a number of efficient asynchronous and distributed stochastic gradient descent (SGD) algorithms, e.g., \cite{recht2011hogwild,lian2015asynchronous,li2014scaling}, have been proposed, in which each worker node asynchronously updates its local model or gradients based on its local dataset, and sends them to the server(s) for model updates or aggregation. Yet, SGD is not particularly suitable for solving optimization problems with non-smooth objectives or with constraints, which are prevalent in practical machine learning adopting regularization, e.g., \cite{liu2009large}. Distributed (synchronous) ADMM \cite{boyd2011distributed,zhang2014asynchronous,chang2016asynchronous1,chang2016asynchronous2,hong2017distributed,wei20131,mota2013d,taylor2016training} has been widely studied as an alternative method, which avoids the common pitfalls of SGD for highly non-convex problems, such as saturation effects, poor conditioning, and saddle points \cite{taylor2016training}.
The original idea on distributed ADMM can be found in \cite{boyd2011distributed}, which is essentially a synchronous algorithm. 
In this work, we focus on studying the asynchronous distributed alternating direction method of multipliers (ADMM) for non-convex non-smooth optimization.


Asynchronous distributed ADMM has been actively discussed in recent literature. 
Zhang and Kwok \shortcite{zhang2014asynchronous} consider an asynchronous ADMM assuming bounded delay, which enables each worker node to update a local copy of the model parameters asynchronously without waiting for other workers to complete their work, while a single server is responsible for driving the local copies of model parameters to approach the global \emph{consensus variables}. They provide proof of convergence for convex objective functions only. Wei and Ozdaglar \shortcite{wei20131} assume that communication links between nodes can fail randomly, and propose an ADMM scheme that converges almost surely to a saddle point. Chang \emph{et al.} \shortcite{chang2016asynchronous1,chang2016asynchronous2} propose an asynchronous ADMM algorithm with analysis for non-convex objective functions. However, their work requires each worker to solve a subproblem \emph{exactly}, which is often costly in practice. Hong \shortcite{hong2017distributed} proposes another asynchronous ADMM algorithm, where each worker only computes the gradients based on local data, while all model parameter updates happen at a single server, a possible bottleneck in large clusters.

To our knowledge, all existing work on asynchronous distributed ADMM requires locking global consensus variables at the (single) server for each model update; although asynchrony is allowed among workers, i.e., workers are allowed to be at different iterations of model updating. Such atomic or memory-locking operations essentially serialize model updates contributed by different workers, which may seriously limit the algorithm scalability. In many practical problems, not all workers need to access all model parameters. For example, in recommender systems, a local dataset of user-item interactions is only associated with a specific set of users (and items), and therefore does not need to access the latent variables of other users (or items). In text categorization, each document usually consists of a subset of words or terms in corpus, and each worker only needs to deal with the words in its own local corpus. 
 
\begin{figure}
  \centering
  \includegraphics[width=3in]{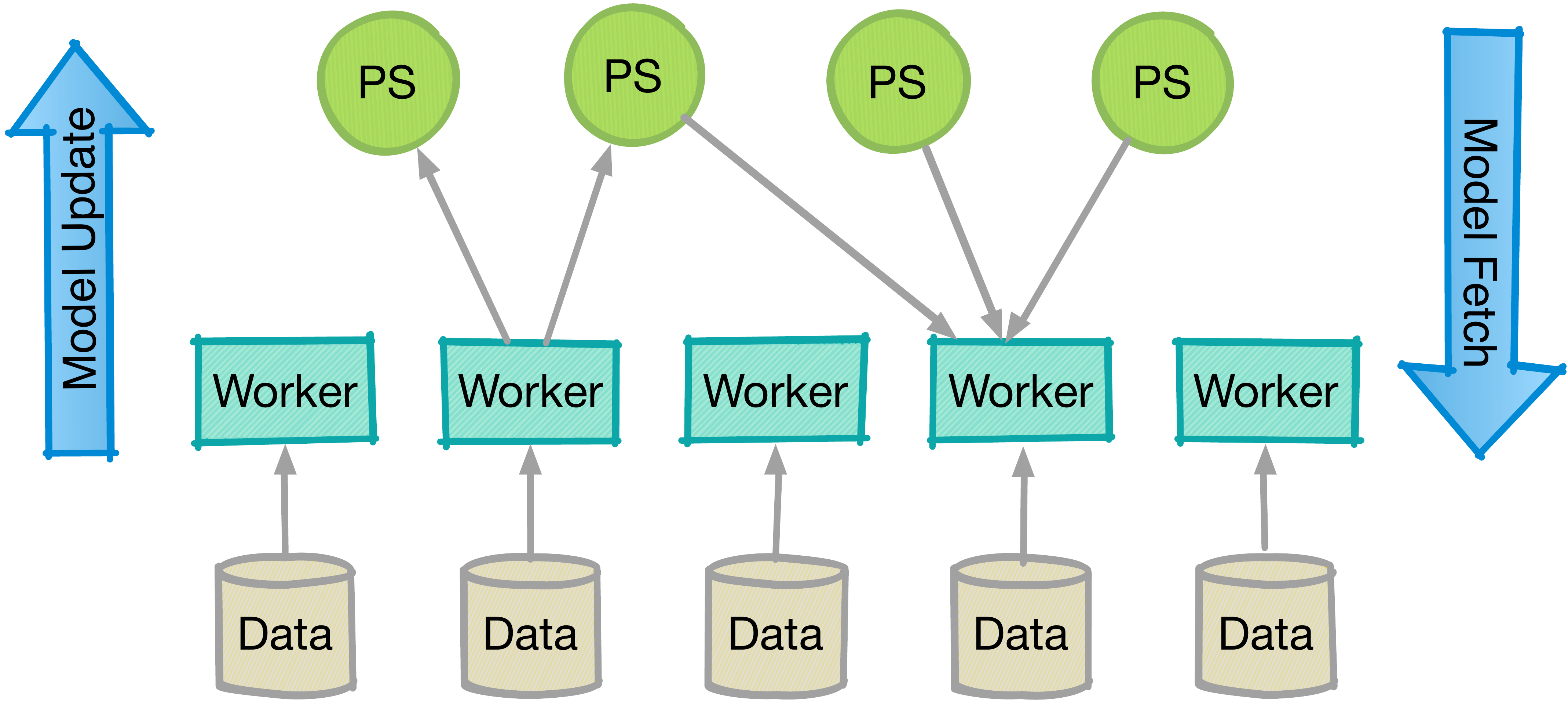}
  \caption{Data flow of Parameter Servers. ``PS'' represents a parameter server task and ``Worker'' represents a worker task.}
  \label{fig:ps}
\end{figure}

A distributed machine learning algorithm is illustrated in Fig.~\ref{fig:ps}. There are multiple server nodes, each known as a ``PS'' and stores a subset (block) of model parameters (consensus variables) $\mathbf{z}$. There are also multiple worker nodes; each worker owns a local dataset, and has a loss function $f_i$ depending on one or several blocks of model parameters, but not necessarily all of them. If there is only one server node, the architecture in Fig.~\ref{fig:ps} degenerates to a ``star'' topology with a single master, which has been adopted by Spark \cite{zaharia2010spark}. With multiple servers, the system is also called a \emph{Parameter Server} architecture \cite{dean2012large,li2014scaling} and has been adopted by many large scale machine learning systems, including TensorFlow \cite{abadi2016tensorflow} and MXNet \cite{chen2015mxnet}.

It is worth noting that enabling \emph{block-wise} updates in ADMM is critical for training large models, such as sparse logistic regression, robust matrix completion, etc., since not all worker nodes will need to work on all model parameters --- each worker only needs to work on the blocks of parameters pertaining to its local dataset.
For these reasons, block-wise updates have been extensively studied for a number of gradient type of distributed optimization algorithms, including SGD \cite{lian2015asynchronous}, proximal gradient descent \cite{li2014communication}, block or stochastic coordinate descent (BCD or SCD) \cite{liu2015asynchronous}, as well as for a recently proposed block successive upper bound minimization method (BSUM) \cite{hong2016unified}.

In this work, we propose the first \emph{block-wise} asynchronous distributed ADMM algorithm that can increase efficiency over existing single-server ADMM algorithms, by better exploiting the parallelization opportunity in model parameter updates. Specifically, we introduce the \emph{general form consensus optimization} problem \cite{boyd2011distributed}, and solve it in a \emph{block-wise} asynchronous fashion, thus making ADMM amenable for implementation on  Parameter Server, with multiple servers hosting model parameters. 
In our algorithm, each worker only needs to work on one or multiple blocks of parameters that are relevant to its local data, while different blocks of model parameters can be updated in parallel asynchronously subject to a bounded delay. 
Since this scheme does not require locking all the decision variables together, it belongs to the set of \emph{lock-free} optimization algorithms (e.g., HOGWILD! \cite{recht2011hogwild} as a lock-free version of SGD) in the literature.
Our scheme is also useful on shared memory systems, such as on a single machine with multi-cores or multiple GPUs, where enforcing atomicity on all the consensus variables is inefficient.


Theoretically, we prove that, for general \emph{non-convex} objective functions, our scheme can converge to stationary points. 
Experimental results on a cluster of 36 CPU cores have demonstrated the convergence and near-linear speedup of the proposed ADMM algorithm, for training sparse logistic regression models based on a large real-world dataset.

\section{Preliminaries}
\label{sec:prelim}

\subsection{Consensus Optimization and ADMM}
\label{sec:basicADMM}
The minimization in \eqref{eq:original} can be reformulated into a \emph{global variable consensus optimization} problem \cite{boyd2011distributed}:
\begin{subequations}
\begin{align}
\mathop{\min}_{\mathbf{z}, \{\mathbf{x}_i\} \in \mathcal{X}} &\quad \sum_{i=1}^{N} f_i(\mathbf{x}_i) + h(\mathbf{z}), \label{eq:consensus-1}\\
\mathrm{s.t.} &\quad \mathbf{x}_i = \mathbf{z}, \quad \forall i = 1,\ldots,N, \label{eq:consensus-2}
\end{align}
\label{eq:consensus}
\end{subequations}
where $\mathbf{z}$ is often called the \emph{global consensus variable}, traditionally stored on a master node, and $\mathbf{x}_i$ is its local copy updated and stored on one of $N$ worker nodes. The function $h$ is decomposable. It has been shown \cite{boyd2011distributed} that such a problem can be efficiently solved using distributed (synchronous) ADMM. In particular, let $\mathbf{y}_i$ denote the Lagrange dual variable associated with each constraint in \eqref{eq:consensus-2} and define the Augmented Lagrangian as
\begin{equation}
\begin{split}
L(\mathbf{X}, \mathbf{Y}, \mathbf{z})
&= \sum_{i=1}^{N} f_i(\mathbf{x}_i) +  h(\mathbf{z}) 
 + \sum_{i=1}^{N} \innerprod{\mathbf{y}_{i}, \mathbf{x}_{i} - \mathbf{z}} \\
 &\quad + \sum_{i=1}^{N}  \frac{\rho_{i}}{2} \norm{\mathbf{x}_{i} - \mathbf{z} }^2,
\end{split}
\label{eq:lagrangian_1}
\end{equation}
where $\mathbf{X}:=(\mathbf{x}_1, \ldots, \mathbf{x}_N)$ represents a juxtaposed matrix of all $\mathbf x_i$ , and $\mathbf{Y}$ represents the juxtaposed matrix of all $\mathbf{y}_i$. We have, for (synchronized) rounds $t=0,1,\ldots$, the following variable updating equations:
{\small
\begin{align*}
 	\mathbf{x}_i^{t+1} &= \mathop{\arg\min}_{\mathbf{x}_i \in \mathcal{X}} f_i(\mathbf{x}_i) 
 +  \innerprod{\mathbf{y}_{i}^t, \mathbf{x}_{i} - \mathbf{z}^{t}} + \frac{\rho_{i}}{2} \norm{\mathbf{x}_{i} - \mathbf{z}^{t} }^2,\\
 	\mathbf{y}_i^{t+1} &= \mathbf{y}_i^{t} + \rho_i (\mathbf{x}_i^{t+1} - \mathbf{z}^{t}), \\
	\mathbf{z}^{t+1} &= \mathop{\arg\min}_{\mathbf{z} \in \mathcal{X}} h(\mathbf{z}) 
 + \sum_{i=1}^{N} \innerprod{\mathbf{y}_{i}^{t}, \mathbf{x}_{i}^t - \mathbf{z}^t} + \sum_{i=1}^{N} \frac{\rho_{i}}{2} \norm{\mathbf{x}^t_{i} - \mathbf{z}^t }^2.
\end{align*}
}

\subsection{General Form Consensus Optimization}

Many machine learning problems involve highly sparse models, in the sense that each local dataset on a worker is only associated with a few model parameters, i.e., each $f_i$ only depends on a subset of the elements in $\mathbf{x}$.
The global consensus optimization problem in \eqref{eq:consensus}, however, ignores such sparsity, since in each round each worker $i$ must push the entire vectors $\mathbf{x}_i$ and $\mathbf{y}_i$ to the master node to update $\mathbf z$. 
\red{In fact, this is the setting of all recent work on asynchronous distributed ADMM, e.g., \cite{zhang2014asynchronous}}.
In this case, when multiple workers attempt to update the global census variable $\mathbf z$ at the same time, $\mathbf z$ must be locked to ensure atomic updates, which leads to diminishing efficiency as the number of workers $N$ increases.

To better exploit model sparsity in practice for further parallelization opportunities between workers, we consider the \emph{general form consensus optimization} problem \cite{boyd2011distributed}. Specifically, with $N$ worker nodes and $M$ server nodes, the vectors $\mathbf{x}_i$, $\mathbf{y}_i$ and $\mathbf{z}$ can all be decomposed into $M$ blocks. Let $z_j$ denote the $j$-th block of the global consensus variable $\mathbf{z}$, located on server $j$, for $j=1,\ldots,M$.
Similarly, let $x_{i,j}$ ($y_{i,j}$) denote the corresponding $j$-th block of the local variable $\mathbf{x}_i$ ($\mathbf{y}_i$) on worker $i$. Let $\mathcal E$ be all the $(i,j)$ pairs such that $f_i$ depends on the block $x_{i,j}$ (and correspondingly depends on $z_j$). Furthermore, let $\mathcal N(j) = \{i|(i,j)\in \mathcal E\}$ denote the set of all the neighboring workers of server $j$. Similarly, let $\mathcal N(i) = \{j|(i,j)\in \mathcal E\}$.

Then, the \emph{general form consensus problem} \cite{boyd2011distributed} is described as follows:
\begin{equation}
\begin{split}
	\mathop{\min}_{z_j, \{x_{i,j}\}}  & \quad \sum_{i=1}^{N} f_i(\{x_{i,j}\}_{j=1}^M) + h(\mathbf{z}), \\
	\mathrm{s.t.} &\quad x_{i,j} = z_j, \quad \forall (i,j) \in \mathcal{E}, \\
	&\quad x_{i,j}, z_j \in \mathcal{X}_j.
\end{split}
\label{eq:general-consensus}
\end{equation}
In fact, in $f_i(\{x_{i,j}\}_{j=1}^M)$, a block $x_{i,j}$ will only be relevant if $(i,j)\in \mathcal E$, and will be a dummy variable otherwise, whose value does not matter. Yet, since the sparse dependencies of $f_i$ on the blocks $j$ can be captured through the specific form of $f_i$, here we have included all $M$ blocks in each $f_i$'s arguments just to simplify the notation.

The structure of problem \eqref{eq:general-consensus} can effectively capture the sparsity inherent to many practical machine learning problems. Since each $f_i$ only depends on a few blocks, the formulation in \eqref{eq:general-consensus} essentially reduces the number of decision variables---it does not matter what value $x_{i,j}$ will take for any $(i,j)\notin\mathcal E$. \red{For example, when training a topic model for documents, the feature of each document is represented as a bag of words, and hence only a subset of all words in the vocabulary will be active in each document's feature. In this case, the constraint $x_{i,j}=z_j$ only accounts for those words $j$ that appear in the document $i$, and therefore only those words $j$ that appeared in document $i$ should be optimized.} 
Like \eqref{eq:lagrangian_1}, we also define the Augmented Lagrangian $L(\mathbf{X}, \mathbf{Y}, \mathbf{z})$\footnote{To simplify notations, we still use $\mathbf{X}$ and $\mathbf{Y}$ as previously defined, but entries $(i,j) \notin \mathcal{E}$ are not taken into account.} as follows:
{\small
\begin{align*}
L(\mathbf{X}, \mathbf{Y}, \mathbf{z})
&= \sum_{i=1}^{N} f_i(\{x_{i,j}\}_{j=1}^M) +  h(\mathbf{z})
 + \sum_{(i,j) \in \mathcal{E}} \innerprod{{y}_{i,j}, {x}_{i,j} - {z}_j} \nonumber \\
 &\quad + \sum_{(i,j) \in \mathcal{E}}  \frac{\rho_{i}}{2} \norm{{x}_{i,j} - {z}_j }^2.
\label{eq:lagrangian_2}
\end{align*}
}

The formulation in \eqref{eq:general-consensus} perfectly aligns with the latest \emph{Parameter Server} architecture as shown in Fig.~\ref{fig:ps}. Here we can let each server node maintain one model block $z_j$, such that worker $i$ updates $z_j$ if and only if $(i, j)\in \mathcal{E}$.
Since all three vectors $\mathbf x_i$, $\mathbf y_i$ and $\mathbf z$ in \eqref{eq:general-consensus} are decomposable into blocks, to achieve a higher efficiency, we will investigate block-wise algorithms which not only enable different workers to send their updates asynchronously to the server (like prior work on asynchronous ADMM does), but also enable different model blocks $\mathbf z_j$ to be updated in parallel and asynchronously on different servers, removing the locking or atomicity assumption required for updating the entire $\mathbf z$.
  




\section{A Block-wise, Asynchronous, and Distributed ADMM Algorithm}
\label{sec:algorithm}

In this section, we present our proposed \emph{block-wise}, \emph{asynchronous} and \emph{distributed} ADMM algorithm (a.k.a, AsyBADMM) for the general consensus problem. For ease of presentation, we first describe a synchronous version motivated by the basic distributed ADMM for \emph{non-convex} optimization problems as a starting point. 

\subsection{Block-wise Synchronous ADMM}
The update rules presented in Sec.~\ref{sec:basicADMM} represent the basic synchronous distributed ADMM approach \cite{boyd2011distributed}.
To solve the general form consensus problem, our block-wise version extends such a synchronous algorithm mainly by 1) approximating the update rule of $\mathbf x_i$ with a simpler expression under non-convex objective functions, and 2) converting the all-vector updates of variables into block-wise updates only for $(i,j)\in \mathcal E$. 
 
Generally speaking, in each synchronized epoch $t$, each worker node $i$ updates all blocks of its local primal variables $x_{i,j}$ and dual variables $y_{i,j}$ for $j\in\mathcal N(i)$, and pushes these updates to the corresponding servers.
Each server $j$, when it has received $x_{i,j}$ and $y_{i,j}$ from all $i\in\mathcal N(j)$, will update $z_j$ accordingly, by aggregating these received blocks.

Specifically, at epoch $t$, the basic synchronous distributed ADMM will do the following update for $\mathbf x_i$:
{\small
\begin{equation*}
\begin{split}
	\mathbf{x}_{i}^{t+1} &= \mathop{\arg\min}_{\mathbf{x}_i} f_i(\mathbf{x}_i)+ \sum_{j\in\mathcal{N}(i)} \innerprod{y_{i,j}^{t}, x_{i,j} - {z_j}^{t}} \\
	&\quad + \sum_{j\in\mathcal{N}(i)} \frac{\rho_i}{2} \norm{x_{i,j} - {z}_j^{t} }^2.
\end{split}
\end{equation*}	
}
However, this subproblem is hard, especially when $f_i$ is non-convex. To handle non-convex objectives, we adopt an alternative solution \cite{hong2017distributed,hong2016convergence} to this subproblem through the following first-order approximation of $f_i$ at $\mathbf z^t$:
{\small
\begin{align}
\mathbf{x}_i^{t+1} &\approx \mathop{\arg\min}_{\mathbf{x}_i} f_i({\mathbf{z}}^{t}) + \innerprod{\nabla f_i(\mathbf{z}^{t}), \mathbf{x}_i - \mathbf{z}^{t}} \nonumber \\
 &\quad + \sum_{j\in\mathcal{N}(i)}\left(\innerprod{y_{i,j}^{t}, x_{i,j} - {z}_j^{t}} + \frac{\rho_i}{2} \norm{x_{i,j} - {z}_j^{t} }^2 \right) \nonumber \\
 &= {\mathbf{z}}^{t} - \frac{\nabla f_i({\mathbf{z}}^{t})+\mathbf{y}_{i}^t}{\rho_{i}}, \label{eq:sub_x}
\end{align}
}
where \eqref{eq:sub_x} can be readily obtained by setting the partial derivative w.r.t. $\mathbf{x}_i$ to zero.  

The above full-vector update on $\mathbf x_i$ is equivalent to the following block-wise updates on each block $x_{i,j}$ by worker $i$:
\begin{equation}
	x_{i,j}^{t+1} = {z}_{j}^{t} - \frac{\nabla_j f_i({\mathbf{z}}^{t})+y_{i,j}^t}{\rho_i}, \label{eq:sync_update_x} 
\end{equation}
where $\nabla_j f_i(\mathbf{z}^{t+1})$ is the partial derivative of $f_i$ w.r.t. $z_j$. Furthermore, the dual variable blocks $y_{i,j}$ can also be updated in a block-wise fashion as follows:
\begin{equation}
	y_{i,j}^{t+1} = y_{i,j}^t + \rho_i(x_{i,j}^{t+1} - {z}_{j}^{t}).
	\label{eq:sync_update_y}
\end{equation}
Note that in fact, each $f_i$ only depends on a part of ${\mathbf{z}}^{t}$ and thus each worker $i$ only needs to pull the relevant blocks $z_j$ for $j\in\mathcal N(i)$. Again, we put the full vector $\mathbf z$ in $f_i(\cdot)$ just to simplify notation.

On the server side, server $j$ will update $z_j$ based on the newly updated $x^{t+1}_{i,j}$, $y^{t+1}_{i,j}$ received from all workers $i$ such that $i \in \mathcal{N}(j)$. Again, the $\mathbf z$ update in the basic synchronous distributed ADMM can be rewritten into the following block-wise format (with a regularization term introduced):  
\begin{align}
z_j^{t+1} &= \mathop{\arg\min}_{z_j \in \mathcal{X}_j} h_j(z_j) + \frac{\gamma}{2} \norm{z_j - z_j^t}^2
 \nonumber \\
&\quad + \sum_{i \in \mathcal{N}(j)} \left( \innerprod{y_{i,j}^{t+1}, x_{i,j}^{t+1} - z_j} + \frac{\rho_i}{2} \norm{x_{i,j}^{t+1} - z_j }^2 \right)   \nonumber \\
&= \prox[\mu]{h} \left( 
\frac{\gamma z_j^t + \sum_{i \in \mathcal{N}(j)}w_{i,j}^t }{\gamma+\sum_{i \in \mathcal{N}(j)} \rho_i}\right), \label{eq:sync_update_z}
\end{align}
where $w_{i,j}^{t+1}$ is defined as
\begin{equation}
	w_{i,j}^{t+1}:=\rho_i x_{i, j}^{t+1} + y_{i,j}^{t+1},
	\label{eq:push_w}
\end{equation}
and the proximal operator is defined as
\begin{equation}
	\prox[\mu]{h}(x) := \mathop{\arg\min}_{u \in \mathcal{X}_j} h(u) + \frac{\mu}{2} \norm{x - u}^2.
\end{equation}
Furthermore, the regularization term $\frac{\gamma}{2} \norm{z_j - z_j^t}^2$ is introduced to stabilize the results, which will be helpful in the asynchronous case.

In the update of $z_j$, the constant $\mu$ of the proximal operator is given by $\sum_{i\in \mathcal{N}(j)} \rho_i$. Now it is clear that it is sufficient for worker $i$ to send $w_{i,j}^t$ to server $j$ in epoch $t$.

\subsection{Block-wise Asynchronous ADMM}

We now take one step further to present a \emph{block-wise} \emph{asynchronous} \emph{distributed} ADMM algorithm, which is our main contribution in this paper.
In the asynchronous algorithm, each worker $i$ will use a local epoch $t$ to keep track of how many times  $\mathbf x_i$ has been updated, although different workers may be in different epochs, due to random delays in computation and communication.

Let us first focus on a particular worker $i$. While worker $i$ is in epoch $t$, there is no guarantee for worker $i$ to download $\mathbf z^t$---different blocks $z_j$ in $\mathbf z$ may have been updated for different numbers of times, for which worker $i$ has no idea. Therefore, we use $\tilde{z}_j^t$ to denote the \emph{latest copy} of $z_j$ on server $j$ while worker $i$ is in epoch $t$ and ${\tilde{\mathbf{z}}}^{t} = (\tilde{z}_1^t,\ldots, \tilde{z}_M^t)$. Then, the original synchronous updating equations \eqref{eq:sync_update_x} and \eqref{eq:sync_update_y} for $x_{i,j}$ and $y_{i,j}$, respectively, are simply replaced by
\begin{align}
	x_{i,j}^{t+1} &= \tilde{z}_{j}^{t} - \frac{\nabla_j f_i({\tilde{\mathbf{z}}}^{t})+y_{i,j}^t}{\rho_i}, \label{eq:update_x} \\
	y_{i,j}^{t+1} &= y_{i,j}^t + \rho_i(x_{i,j}^{t+1} - \tilde{z}_{j}^{t}). \label{eq:update_y}
\end{align}

Now let us focus on the server side. Since in the asynchronous case, the variables $w^{t+1}_{i,j}$ for different workers $i$ do not generally arrive at the server $j$ at the same time. In this case, we will update $z^{t+1}_j$ \emph{incrementally} as soon as a $w^{t}_{i,j}$ is received from some worker $i$ until all $w^{t}_{i,j}$ are received for all $i\in\mathcal N(j)$, at which point the update for $z^{t+1}_j$ is \emph{fully finished}. We use $\tilde z^{t+1}_j$ to denote the working (dirty) copy of $z^{t+1}_j$ for which the update may not be fully finished by all workers yet. Then, the update of $\tilde z^{t+1}_j$ is given by
\begin{align}
\tilde z_j^{t+1} 
&= \prox[\mu]{h} \left( 
\frac{\gamma \tilde z_j^t + \sum_{i \in \mathcal{N}(j)}\tilde w_{i,j}}{\gamma+\sum_{i \in \mathcal{N}(j)} \rho_i}\right), \label{eq:update_z}
\end{align}
where $\tilde w_{i,j} = w_{i,j}^t$ if $w_{i,j}^t$ is received from worker $i$ and triggering the above update; and for all other $i$, $\tilde w_{i,j}$ is the latest version of $w_{i,j}$ that server $j$ holds for worker $i$. The regularization coefficient $\gamma>0$ helps to stabilize convergence in the asynchronous execution with random delays.





\begin{algorithm}[htbp]
\caption{AsyBADMM: Block-wise Asynchronous ADMM}
\label{alg:Asybadmm}
\underline{Each \textbf{worker} $i$ asynchronously performs:}
\begin{algorithmic}[1]
\State pull $\mathbf{z}^0$ to initialize $\mathbf{x}^0=\mathbf{z}^0$
\State initialize $\mathbf{y}^0$ as the zero vector.
\For {$t=0$ \textbf{to} $T-1$}
	\State select an index $j_t\in\mathcal{N}(i)$ uniformly at random
	\State compute gradient $\nabla_{j_t} f(\mathbf{\tilde z}^{t})$
	\State update $x_{i, j_t}^{t+1}$ and $y_{i, j_t}^{t+1}$ by \eqref{eq:update_x} and \eqref{eq:update_y}
	\State push $w_{i,j_t}^{t+1}$ as defined by \eqref{eq:push_w} to server $j_t$
	\State pull the current models $\tilde{\mathbf{z}}^{t+1}$ from servers
\EndFor
\end{algorithmic}
\underline{Each \textbf{server} $j$ asynchronously performs:}
\begin{algorithmic}[1]
	\State initialize $\tilde z^0_{j}$ and $\tilde w_{i,j}$ for all $i \in \mathcal{N}(j)$
	\State Upon receiving ${w}_{i,j}^t$ from a worker $i$:
	\State \quad let $\tilde{w}_{i,j}\gets {w}_{i,j}^t$ 
	\State \quad update $\tilde z_{j}^{t+1}$ by \eqref{eq:update_z}.
	\State \quad if $w_{i,j}^t$ has been received for all $i \in \mathcal{N}(j)$ then $z_{j}^{t+1}\gets\tilde z_{j}^{t+1}$.
\end{algorithmic}
\end{algorithm}

Algorithm~\ref{alg:Asybadmm} describes the entire block-wise asynchronous distributed ADMM.
 Note that in Algorithm~\ref{alg:Asybadmm}, the block $j$ is randomly and independently selected from $\mathcal{N}(i)$ according to a uniform distribution, which is common in practice. Due to the page limit, we only consider the random block selection scheme, and we refer readers to other options including Gauss-Seidel and Gauss-Southwell block selection in the literature, e.g., \cite{hong2016unified}.


We put a few remarks on implementation issues at the end of this section to characterize key features of our proposed block-wise asynchronous algorithm, which differs from full-vector updates in the literature \cite{hong2017distributed}. \emph{Firstly}, model parameters are stored in blocks, so different workers can update different blocks asynchronously in parallel, which takes advantage of the popular Parameter Server architecture. \emph{Secondly}, workers can pull $\mathbf{z}$ while others are updating some blocks, enhancing concurrency. \emph{Thirdly}, in our implementation, workers will compute both gradients and local variables. In contrast, in the full-vector ADMM \cite{hong2017distributed}, workers are only responsible for computing gradients, therefore all previously computed and transmitted $\tilde{w}_{i,j}$ must be cached on servers with non-negligible memory overhead. 




\section{Convergence Analysis}
\label{sec:theory}

\begin{figure*}[t]
  \centering
  \subfigure[iteration vs. objective]{
    \includegraphics[width=3.0in]{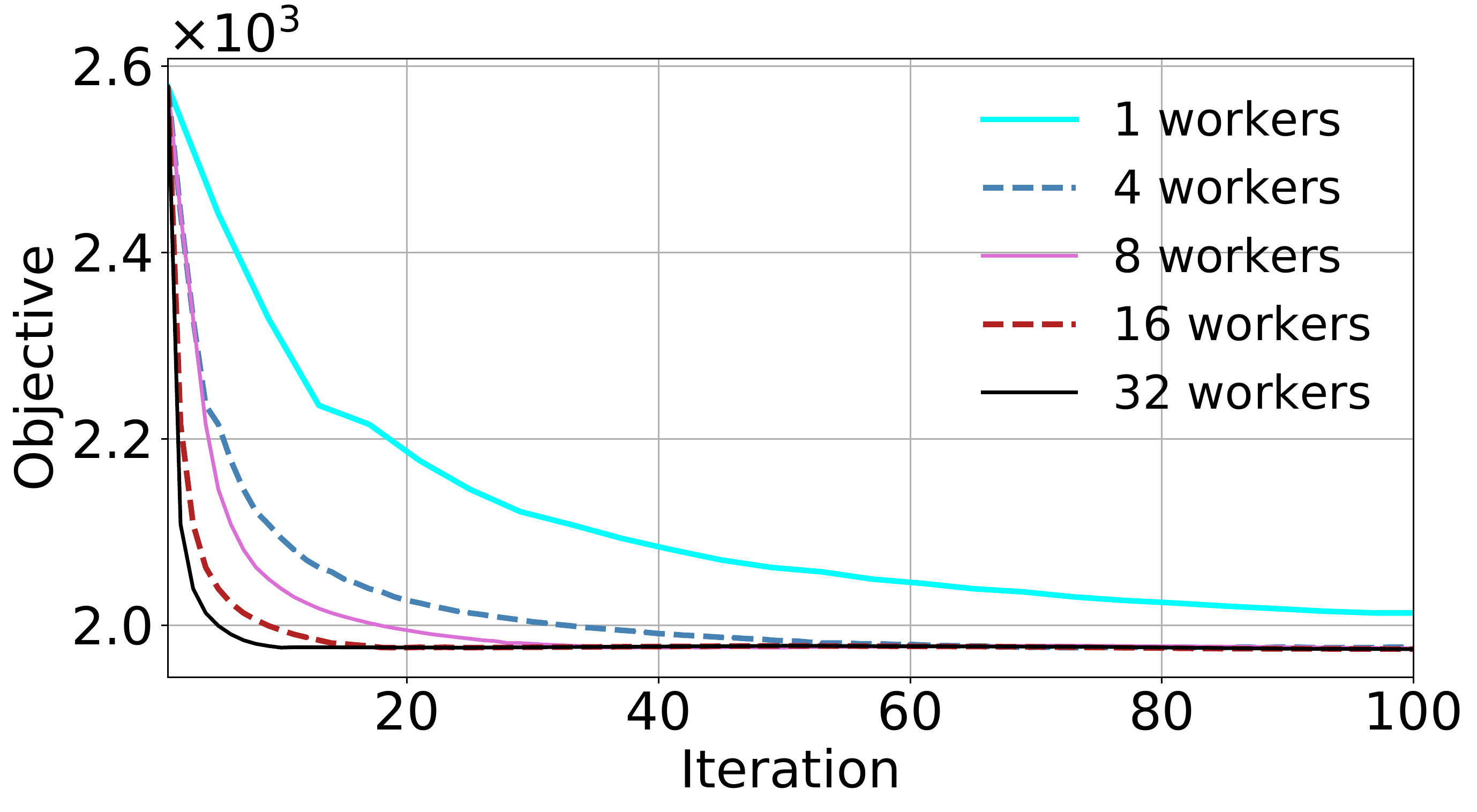}
    \label{fig:obj1}
  }
  \vspace{1mm}
  \subfigure[time vs. objective]{
    \includegraphics[width=3.0in]{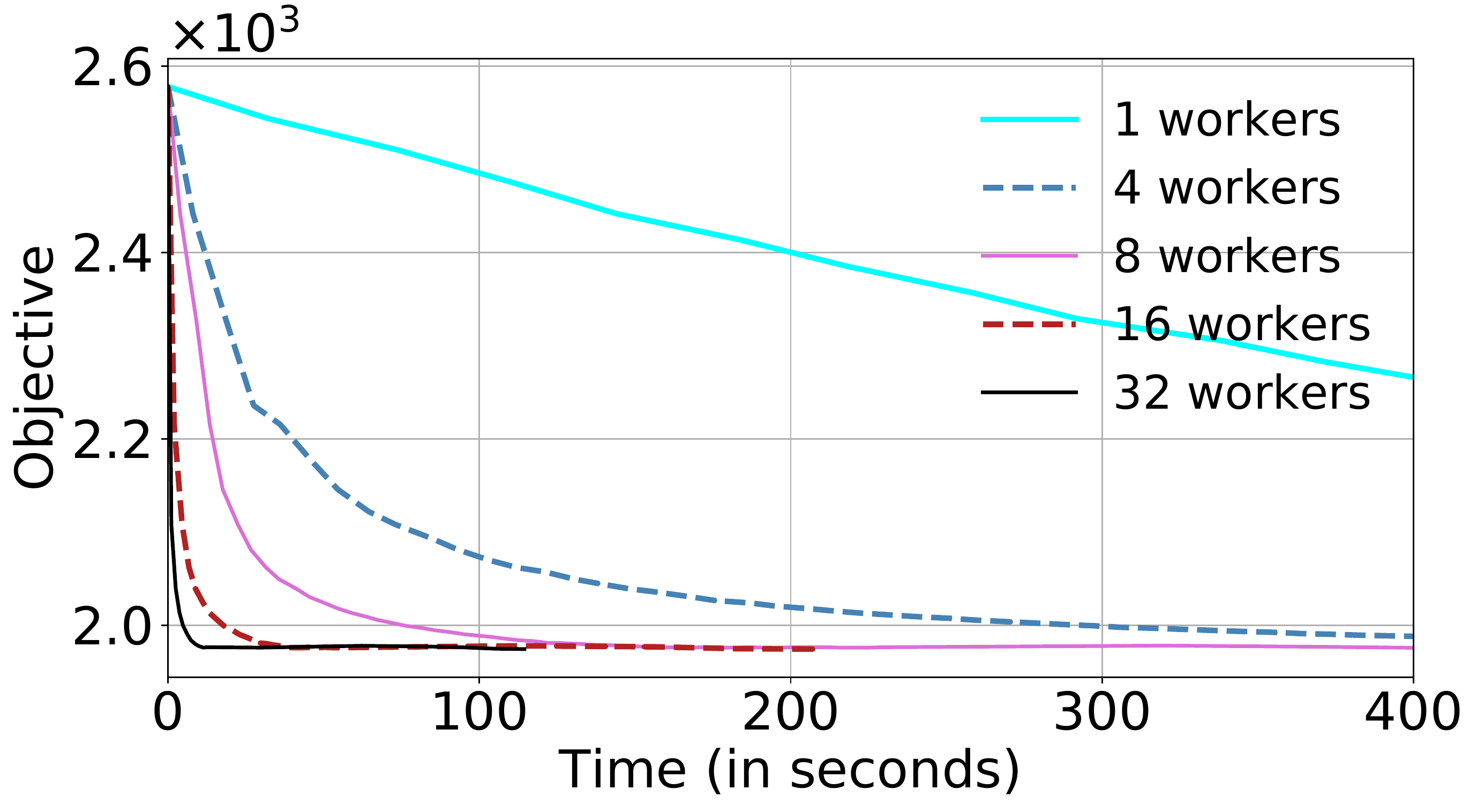}
    \label{fig:obj2}
  }
  \vspace{-3mm}
  \caption{Convergence of AsyBADMM on the sparse logistic regression problem.}
  \vspace{-2mm}
  \label{fig:simu_obj}
\end{figure*}

In this section, we provide convergence analysis of our algorithm under certain standard assumptions:
\begin{asmp}[Block Lipschitz Continuity]
For all $(i, j)\in\mathcal{E}$, there exists a positive constant $L_{i,j} > 0$ such that
\begin{displaymath}
	\norm{\nabla_j f_i(\mathbf{x}) - \nabla_j f_i(\mathbf{z})} \leq L_{i,j} \norm{x_j - z_j}, \forall \mathbf{x}, \mathbf{z} \in\real{d}.
\end{displaymath}
\vspace{-3mm}
\label{asmp:lipschitz}
\end{asmp}
\begin{asmp}[Bounded from Below]
Each function $f_i(\mathbf{x})$is bounded below, i.e., there exists a finite number $\underline{f} > -\infty$ where $\underline{f}$ denotes the optimal objective value of problem \eqref{eq:general-consensus}.
\label{asmp:below}
\end{asmp}
\begin{asmp}[Bounded Delay]
  The total delay of each link $(i,j)\in \mathcal{E}$ is bounded with the constant of $T_{i,j}$ for each pair of worker $i$ and server $j$. Formally, there is an integer $0 \leq \tau \leq T_{i,j}$, such that $\tilde{z}_j^t = z_j^{t-\tau}$ for all $t>0$. This should also hold for $\tilde{w}_{i,j}$.
\label{asmp:delay}
\end{asmp}

To characterize the convergence behavior, a commonly used metric is the squared norm of gradients. Due to potential nonsmoothness of $h(\cdot)$, Hong \emph{et al.} propose a new metric \shortcite{hong2016convergence} as a hybrid of gradient mapping as well as vanilla gradient as follows:
\begin{align}
P(\mathbf{X}^t, \mathbf{Y}^t, \mathbf{z}^t) :=& \norm{\mathbf{z}^t - \hat{\mathbf{z}}^{t}}^2 + \sum_{(i,j)\in\mathcal{E}} \norm{\nabla_{x_{i,j}} L(\mathbf{X}^t, \mathbf{Y}^t, \mathbf{z}^t)}^2 \nonumber \\
& + \sum_{(i,j)\in\mathcal{E}} \norm{x_{i,j} - z_j}^2,
\label{eq:grad_norm}
\end{align}
where $\hat{\mathbf{z}}^t=(\hat{z}_1^t, \ldots, \hat{z}_M^t)$ is defined as
\begin{equation}
	\hat{z}_j^t := \prox{h}(z_j^t - \nabla_{z_j}(L(\mathbf{X}^t, \mathbf{Y}^t, \mathbf{z}^t)-h(\mathbf{z}^t))).
\end{equation}
It is clear that if $P(\mathbf{X}^t, \mathbf{Y}^t, \mathbf{z}^t)\to 0$, then we obtain a stationary solution of \eqref{eq:original}.

The following Theorem~\ref{thm:convergence} indicates that Algorithm~\ref{alg:Asybadmm} converges to a stationary point satisfying KKT conditions under suitable choices of hyper-parameters.
\begin{thm}
Suppose that Assumptions~\ref{asmp:lipschitz}-\ref{asmp:delay} hold. Moreover, for all $i$ and $j$, the penalty parameter $\rho_{i}$ and $\gamma$ are chosen to be sufficiently large such that:
{\footnotesize
\begin{align}
\infty & > L(\mathbf{X}^0, \mathbf{Y}^0, \mathbf{z}^0) - \underline{f} \geq 0 \\
& \alpha_{j} := \gamma + \rho_i -\sum_{i\in \mathcal{N}(j)}\left(\frac{1}{2} + \frac{1}{\rho_{i}}\right) L_{i,j}^2(T_{i,j}+1)^2  \nonumber\\
&\quad\quad\quad - \sum_{i\in \mathcal{N}(j)}\frac{(4L_{i,j}+\rho_{i}+1)T_{i,j}^2}{2} >0, \\
& \beta_{i} := \frac{\rho_{i}- 4\max_{j \in \mathcal{N}(i)} L_{i,j}}{2|\mathcal{N}(i)|} > 0.
\end{align}
}
Then the following is true for Algorithm~\ref{alg:Asybadmm}:
\begin{enumerate}
	\item Algorithm~\ref{alg:Asybadmm} converges in the following sense:
	\begin{subequations}
	\begin{align}
	\lim_{t\to \infty} \norm{z_j^{t+1}  - z_j^{t}} = 0, &\quad \forall j=1,\ldots,M, \label{eq:lim_z}\\
	\lim_{t\to \infty} \norm{x_{i,j}^{t+1}  - x_{i,j}^{t}} = 0, &\quad \forall (i, j) \in \mathcal{E}, \label{eq:lim_x} \\
	\lim_{t\to \infty} \norm{y_{i,j}^{t+1}  - y_{i,j}^{t}} = 0, &\quad \forall (i, j) \in \mathcal{E}. \label{eq:lim_y}
	\end{align}	
	\end{subequations}

	\item For each worker $i$ and server $j$, denote the limit points of $\{x_{i,j}^t\}, \{y_{i,j}^t\}$, and $\{z_{j}^t\}$ by $x_{i,j}^*, y_{i,j}^*$ and $z_{j}^*$, respectively. Then these limit points satisfy KKT conditions, i.e., we have
	\begin{subequations}
		\begin{align}
			\nabla_j f_i(\mathbf{x}_i^*) + {y}_{i,j}^* = 0, &\quad \forall (i,j) \in \mathcal{E}, \label{eq:kkt-1}\\
			\sum_{j \in \mathcal{N}(i)} y_{i,j}^* \in \partial h_j(z_j^*) , &\quad \forall j=1,\ldots,M, \label{eq:kkt-2}\\
			x_{i,j}^{*} = z_j^* \in \mathcal{X}_j, &\quad \forall (i, j)\in \mathcal{E}. \label{eq:kkt-3}
		\end{align}
	\end{subequations}
	When sets $\mathcal{X}_j$ are compact, the sequence of iterates generated by Algorithm~\ref{alg:Asybadmm} converges to stationary points.

	\item For some $\epsilon > 0$, let $T(\epsilon)$ denote the epoch that achieves the following:
	\[ T(\epsilon) = \min \{ t | P(\mathbf{X}^t, \mathbf{Y}^t, \mathbf{z}^t) \leq \epsilon, t \geq 0  \}. \]
	Then there exists some constant $C > 0$ such that
	\begin{equation}
		T(\epsilon) \leq \frac{C(L(\mathbf{X}^0, \mathbf{Y}^0, \mathbf{z}^0) - \underline{f})}{\epsilon},
	\end{equation}
	where $\underline{f}$ is defined in Assumption~\ref{asmp:below}.
\end{enumerate}
\label{thm:convergence}
\end{thm}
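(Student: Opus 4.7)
The plan is to treat the augmented Lagrangian $L(\mathbf{X},\mathbf{Y},\mathbf{z})$ as a Lyapunov potential and establish a sufficient-descent inequality across one asynchronous block iteration, in the spirit of the non-convex ADMM analysis of \cite{hong2016convergence}, suitably modified to accommodate the block-wise structure and the staleness of $\tilde{\mathbf{z}}^t$ and $\tilde w_{i,j}$. I would decompose one iteration into three segments: (i) the worker's $x_{i,j}$-update via \eqref{eq:update_x}, (ii) the dual update via \eqref{eq:update_y}, and (iii) the server's $z_j$-update via \eqref{eq:update_z}. Because \eqref{eq:update_x} is the exact minimizer of a quadratic upper model of $L$ centred at $\tilde{\mathbf{z}}^t$, the first-order optimality condition $\nabla_j f_i(\tilde{\mathbf{z}}^t) + y_{i,j}^t + \rho_i(x_{i,j}^{t+1}-\tilde z_j^t) = 0$ combined with Assumption~\ref{asmp:lipschitz} yields a bound $L(\mathbf{X}^{t+1},\mathbf{Y}^t,\mathbf{z}^t)-L(\mathbf{X}^t,\mathbf{Y}^t,\mathbf{z}^t)\leq -c_1\norm{x_{i,j}^{t+1}-x_{i,j}^t}^2 + c_2\norm{z_j^t-\tilde z_j^t}^2$. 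A central observation used repeatedly is that the same optimality identifies $y_{i,j}^{t+1}=-\nabla_j f_i(\tilde{\mathbf{z}}^t)$, so the troublesome dual-update contribution $(1/\rho_i)\norm{y_{i,j}^{t+1}-y_{i,j}^t}^2$ reduces to a gradient difference controlled by $L_{i,j}^2\norm{\tilde{\mathbf{z}}^{t+1}-\tilde{\mathbf{z}}^t}^2$ and can be absorbed.

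Next, since \eqref{eq:update_z} minimizes a $(\gamma+\sum_{i\in\mathcal N(j)}\rho_i)$-strongly convex subproblem, it gives $L(\mathbf{X}^{t+1},\mathbf{Y}^{t+1},\mathbf{z}^{t+1})-L(\mathbf{X}^{t+1},\mathbf{Y}^{t+1},\mathbf{z}^t)\leq -\tfrac12(\gamma+\sum\rho_i)\norm{z_j^{t+1}-z_j^t}^2$ modulo a residual from $\tilde w_{i,j}$ versus the fresh $w_{i,j}^{t+1}$. The delay terms $\norm{z_j^t-\tilde z_j^t}^2$ and $\norm{w_{i,j}-\tilde w_{i,j}}^2$ are then unrolled by Assumption~\ref{asmp:delay}: Cauchy--Schwarz gives $\norm{z_j^t-\tilde z_j^t}^2\leq T_{i,j}\sum_{k=t-T_{i,j}}^{t-1}\norm{z_j^{k+1}-z_j^k}^2$. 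Summing the one-step descent over $t=0,\dots,T-1$ and swapping the order, each $\norm{z_j^{k+1}-z_j^k}^2$ appears at most $T_{i,j}$ times, and the aggregate coefficients on $\norm{z_j^{t+1}-z_j^t}^2$ and $\norm{x_{i,j}^{t+1}-x_{i,j}^t}^2$ collapse to precisely $\alpha_j$ and $\beta_i$ as stated. Coupled with Assumption~\ref{asmp:below}, positivity of $\alpha_j,\beta_i$ yields the summability $\sum_t\norm{z_j^{t+1}-z_j^t}^2,\sum_t\norm{x_{i,j}^{t+1}-x_{i,j}^t}^2<\infty$, which immediately proves \eqref{eq:lim_z} and \eqref{eq:lim_x}; \eqref{eq:lim_y} then follows from $y_{i,j}^{t+1}-y_{i,j}^t=\rho_i(x_{i,j}^{t+1}-\tilde z_j^t)$ together with bounded delay, establishing Part 1.

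For Part 2, I would pass to the limit along a convergent subsequence (guaranteed by compactness of $\mathcal X_j$): the worker's first-order optimality, combined with Part 1 and continuity of $\nabla_j f_i$, gives \eqref{eq:kkt-1}; the subdifferential optimality of the proximal $z$-update produces \eqref{eq:kkt-2}; and \eqref{eq:kkt-3} is immediate from $\norm{x_{i,j}^{t+1}-\tilde z_j^t}\to 0$. For Part 3, I would bound the stationarity measure $P(\mathbf{X}^t,\mathbf{Y}^t,\mathbf{z}^t)$ term-by-term by a constant multiple of $\norm{z^{t+1}-z^t}^2+\sum\norm{x_{i,j}^{t+1}-x_{i,j}^t}^2+\sum\norm{y_{i,j}^{t+1}-y_{i,j}^t}^2$ (the last again reduced to $z$-differences via bounded delay). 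The telescoped descent from Part 1 then gives $\sum_{t=0}^{T-1}P(\mathbf{X}^t,\mathbf{Y}^t,\mathbf{z}^t)\leq C(L(\mathbf X^0,\mathbf Y^0,\mathbf z^0)-\underline f)$, and by definition of $T(\epsilon)$ this yields $\epsilon\cdot T(\epsilon)\leq C(L^0-\underline f)$.

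The hard part will be the bookkeeping in the summation step: one must simultaneously unroll two distinct staleness windows (for $\tilde z$ and for $\tilde w$, potentially different across links $(i,j)$), swap the order of a triple summation, and verify that the net coefficients match the intricate expressions defining $\alpha_j$ and $\beta_i$ exactly rather than up to a generic constant, since the hypotheses of the theorem are tight. A secondary subtlety is the random block selection in the worker loop: only the block $j_t$ is updated per iteration, so the per-step descent must be taken in \emph{expectation} over $j_t$, introducing the factor $1/|\mathcal N(i)|$ already appearing in $\beta_i$; this forces the Lyapunov argument to be conducted in conditional expectation and the convergence conclusions to be interpreted in the almost-sure/supermartingale sense rather than strict deterministic monotonicity.
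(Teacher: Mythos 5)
Your proposal follows essentially the same route as the paper's proof: the same three-way decomposition of the augmented Lagrangian descent (primal, dual, consensus updates), the same key identity $y_{i,j}^{t+1}=-\nabla_j f_i(\tilde{\mathbf z}^{t+1})$ to reduce dual increments to gradient differences, the same unrolling of staleness via Cauchy--Schwarz and the bounded-delay assumption to recover $\alpha_j$ and $\beta_i$, and the same subsequence/telescoping arguments for Parts 2 and 3. Your closing remark that the random block selection forces the descent to hold only in conditional expectation (so the limits should strictly be read in an almost-sure or supermartingale sense) is in fact a point of care the paper itself glosses over.
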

Due to the non-convex objective function $f_i(\mathbf{x}_i)$, no guarantee of global optimality is possible in general. The parameter $\rho_{i}$ acts like the learning rate hyper-parameter in gradient descent: a large $\rho_{i}$ slows down the convergence and a smaller one can speed it up. The term $\gamma$ is associated with the delay bound $T_{i,j}$. In the synchronous case, we can set $\gamma=0$; otherwise, to guarantee convergence, $\gamma$ should be increased as the maximum allowable delay $T_{i,j}$ increases.


\section{Experiments}
\label{sec:simu}

We now show how our algorithm can be used to solve the challenging non-convex non-smooth problems in machine learning. We will show how AsyBADMM exhibits a near-linear speedup as the number of workers increases. We use a cluster of 18 instances of type \texttt{c4.large} on Amazon EC2. This type of instances has 2 CPU cores and at least 3.75 GB RAM, running 64-bit Ubuntu 16.04 LTS (HVM). Each server and worker process uses up to 2 cores. In total, our deployment uses 36 CPU cores and 67.5 GB RAM. Two machines serve as server nodes, while the other 16 machines serve as worker nodes. Note that we treat one core as a computational node (either a worker or server node).

\textbf{Setup:} 
In this experiment, we consider the sparse logistic regression problem:
\begin{equation}
\begin{split}
  \min_{\mathbf{x}} &\quad \frac{1}{m}\sum_{l=1}^{m} \log(1+\exp(-\tilde{y}_l \innerprod{\tilde{\mathbf{x}}_l, \mathbf{x}})) + \lambda \norm{\mathbf{x}}_1\\
  \mathrm{s.t.} &\quad \norm{\mathbf{x}}_\infty \leq C,
\end{split}
\end{equation}
where the constant $C$ is used to clip out some extremely large values for robustness. The $\ell_1$-regularized logistic regression is one of the most popular algorithms used for large scale risk minimization. We consider a public sparse text dataset \texttt{KDDa} \footnote{\url{http://www.csie.ntu.edu.tw/~cjlin/libsvmtools/datasets/}}. This dataset has more than 8 million samples, 20 million features, and 305 million nonzero entries. To show the advantage of parallelism, we set up five experiments with 1, 4, 8, 16 and 32 nodes, respectively. In each experiment, the whole dataset will be evenly split into several smaller parts, and each node only has access to its local dataset.

We implement our algorithm on the \texttt{ps-lite} framework \cite{li2014scaling}, which is a lightweight implementation of Parameter Server architecture. It supports Parameter Server for multiple devices in a single machine, and multiple machines in a cluster. This is the back end of \texttt{kvstore} API of the deep learning framework MXNet \cite{chen2015mxnet}. Each worker updates the blocks by cycling through the coordinates of $x$ and updating each in turns, restarting at a random coordinate after each cycle.


\textbf{Results:}
Empirically, Assumption~\ref{asmp:delay} is observed to hold for this cluster. We set the hyper-parameter $\gamma=0.01$ and the clip threshold constant as $C=10^4$, and the penalty parameter $\rho_{i,j}=100$ for all $(i,j)$. Fig.~\ref{fig:obj1} and Fig.~\ref{fig:obj2} show the convergence behavior of our proposed algorithm in terms of objective function values. From the figures, we can clearly observe the convergence of our proposed algorithm. This observation confirms that asynchrony with tolerable delay can still lead to convergence.

To further analyze the parallelism in AsyBADMM, we measure the speedup by the relative time for $p$ workers to perform $k$ iterations, i.e., Speedup of $p$ workers = $\frac{T_k(1)}{T_k(p)}$,
where $T_k(p)$ is the time it takes for $p$ workers to perform $k$ iterations of optimization. Fig.~\ref{fig:obj2} illustrates the running time comparison and Table~\ref{table:time_vs_iter} shows that AsyBADMM actually achieves near-linear speedup.

\begin{table}[]
\centering
\caption{Running time (in seconds) for iterations $k$ and worker count.}
\label{table:time_vs_iter}
\begin{tabular}{ccccc}
\hline
Workers $p$ & $k=20$ & $k=50$ & $k=100$ & Speedup \\ \hline
1       & 1404   & 3688   & 6802    & 1.0     \\
4       & 363    & 952    & 1758    & 3.87    \\
8       & 177    & 466    & 859     & 7.92    \\
16      & 86     & 226    & 417     & 16.31   \\
32      & 47     & 124    & 228     & 29.83   \\ \hline
\end{tabular}
\end{table}



\section{Concluding Remarks}
\label{sec:conclude}

In this paper, we propose a block-wise, asynchronous and distributed ADMM algorithm to solve general non-convex and non-smooth optimization problems in machine learning. Under the bounded delay assumption, we have shown that our proposed algorithm can converge to stationary points satisfying KKT conditions. The block-wise updating nature of our algorithm makes it feasible to be implemented on Parameter Server, take advantage of the ability to update different blocks of all model parameters in parallel on distributed servers.
Experimental results based on a real-world dataset have demonstrated the convergence and near-linear speedup of the proposed ADMM algorithm, for training large-scale sparse logistic regression models in Amazon EC2 clusters.

\bibliographystyle{named}
\bibliography{src/main}





\onecolumn

\appendix

\section{Key Lemmas}
\begin{lem}
Suppose Assumption~\ref{asmp:lipschitz}-\ref{asmp:delay} are satisfied. Then we have
\begin{equation}
\norm{y_{i,j}^{t+1} - y_{i,j}^t }^2 \leq L_{i,j}^2 (T_{i,j} + 1) \sum_{t'=0}^{T_{i,j}} \norm{z_j^{t+1-t'} - z_j^{t-t'}}^2.
\label{eq:ybound}
\end{equation}
\label{lem:ybound}
\end{lem}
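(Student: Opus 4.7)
The plan is to exploit the fact that, under the specific update rules \eqref{eq:update_x} and \eqref{eq:update_y}, the dual variable $y_{i,j}^{t+1}$ collapses to a simple expression involving only a partial gradient, so that the dual difference becomes a gradient difference and the Lipschitz assumption kicks in.

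First, I would substitute \eqref{eq:update_x} directly into \eqref{eq:update_y}. A one-line calculation gives
\begin{equation*}
    y_{i,j}^{t+1}
    = y_{i,j}^{t} + \rho_i\bigl(x_{i,j}^{t+1} - \tilde z_j^{t}\bigr)
    = y_{i,j}^{t} - \bigl(\nabla_j f_i(\tilde{\mathbf z}^{t}) + y_{i,j}^{t}\bigr)
    = -\nabla_j f_i(\tilde{\mathbf z}^{t}).
\end{equation*}
Thus for every $t \geq 1$ one has $y_{i,j}^{t+1} - y_{i,j}^{t} = \nabla_j f_i(\tilde{\mathbf z}^{t-1}) - \nabla_j f_i(\tilde{\mathbf z}^{t})$. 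Applying the block Lipschitz Assumption~\ref{asmp:lipschitz} then yields
\begin{equation*}
    \norm{y_{i,j}^{t+1} - y_{i,j}^{t}}^2 \leq L_{i,j}^2 \, \norm{\tilde z_j^{t} - \tilde z_j^{t-1}}^2.
\end{equation*}

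Next I would unfold the two delayed iterates using Assumption~\ref{asmp:delay}. Write $\tilde z_j^{t} = z_j^{t-\tau_t}$ and $\tilde z_j^{t-1} = z_j^{t-1-\tau_{t-1}}$ with $0 \leq \tau_t,\tau_{t-1} \leq T_{i,j}$. Then $\tilde z_j^{t} - \tilde z_j^{t-1}$ is a telescoping sum of at most $T_{i,j}+1$ consecutive one-step increments $z_j^{s+1} - z_j^{s}$, where every index $s$ lies in the window $[t - T_{i,j},\, t]$. Applying Cauchy--Schwarz (or equivalently the quadratic mean inequality) to this telescoping sum gives
\begin{equation*}
    \norm{\tilde z_j^{t} - \tilde z_j^{t-1}}^2 \leq (T_{i,j}+1) \sum_{t'=0}^{T_{i,j}} \norm{z_j^{t+1-t'} - z_j^{t-t'}}^2,
\end{equation*}
since the right-hand sum dominates all $T_{i,j}+1$ candidate increments in the window. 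Combining this with the Lipschitz bound delivers \eqref{eq:ybound}.

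The main technical care is in the index bookkeeping for the telescoping step: one must be sure that both delayed indices $t-\tau_t$ and $t-1-\tau_{t-1}$ fall inside the window covered by the sum on the right-hand side so that every increment produced by the telescoping is genuinely present (and Cauchy--Schwarz only pays a factor of $T_{i,j}+1$, not larger). No additional structure beyond Assumptions~\ref{asmp:lipschitz}--\ref{asmp:delay} and the algebraic form of the updates is needed; in particular, convexity of $f_i$ is not used.
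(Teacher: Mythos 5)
Your proof is correct and is essentially the paper's own argument: both identify $y_{i,j}^{t+1}$ with $-\nabla_j f_i(\tilde{\mathbf z})$ by combining the $x$- and $y$-updates, apply block Lipschitz continuity to turn the dual difference into $L_{i,j}\norm{\tilde z_j - \tilde z_j'}$ for two consecutive delayed iterates, and then telescope over at most $T_{i,j}+1$ one-step increments with Cauchy--Schwarz. The only discrepancy is a one-index shift (your telescoping window is $[t-1-T_{i,j},\,t-1]$ rather than the stated $[t-T_{i,j},\,t]$), which traces to the paper's own inconsistency between \eqref{eq:update_x}, where the gradient is evaluated at $\tilde{\mathbf z}^{t}$, and its appendix, which evaluates it at $\tilde{\mathbf z}^{t+1}$; under the latter convention your argument lands exactly on \eqref{eq:ybound}.
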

\begin{proof}
For simplicity, we say $(i,j)$ is performed at epoch $t$ when worker $i$ is updating block $j$ at epoch $t$. If the updating $(i,j)$ is not performed at epoch $t$, this inequality holds in trivial, as $y_{i,j}^{t+1} = y_{i,j}^t$. So we only consider the case that $(i,j)$ is performed at epoch $t$. Note that in this case, we have
\begin{equation}
	\nabla_j f_i(\tilde{\mathbf{z}}^{t+1}) + y_{i,j}^t + \rho_{i} (x_{i,j}^{t+1} - \tilde{z}_{j}^{t+1}) = 0.
\end{equation}
Since $y_{i,j}^{t+1} = y_{i,j}^t + \rho_{i}(x_{i,j}^{t+1} - \tilde{z}_{j}^{t+1})$, we have
\begin{equation}
	\nabla_j f_i(\tilde{\mathbf{z}}^{t+1}) + y_{i,j}^{t+1} = 0.
\end{equation}
Therefore, we have
\begin{equation*}
\begin{split}
	\norm{y_{i,j}^{t+1} - y_{i,j}^t } &= \norm{\nabla_j f_i(\tilde{\mathbf{z}}^{t+1}) - \nabla_j f_i(\tilde{\mathbf{z}}^{t})} \\
	&\leq L_{i,j} \norm{\tilde{z}_j^{t+1} - \tilde{z}_j^{t}}.
\end{split}
\end{equation*}
Since the actual updating time for $\tilde{z}_j^{t+1}$ should be in $\{t+1, t, \ldots, t+1-T_{i,j}\}$, and for $\tilde{z}_j^{t}$ in $\{t, \ldots, t-T_{i,j}\}$, then we have
\begin{equation}
\norm{y_{i,j}^{t+1} - y_{i,j}^t }^2 \leq L_{i,j}^2 (T_{i,j} + 1) \sum_{t'=0}^{T_{i,j}} \norm{z_j^{t+1-t'} - z_j^{t-t'}}^2,
\end{equation} 
which proves the lemma.
\end{proof}

\begin{lem}
	At epoch $t$, we have
	\begin{equation}
		\nabla_j f_{i}(\tilde{\mathbf{z}}^{t+1}) + y_{i,j}^t + \rho_{i}(x_{i,j}^{t+1} - {z}_{j}^{t+1}) = \rho_{i} (\tilde{z}_{j}^{t+1} - z_{j}^{t+1}).
	\end{equation}
	\label{lem:4}
\end{lem}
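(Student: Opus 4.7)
The plan is to derive Lemma~4 as a simple algebraic consequence of the optimality condition for the $x_{i,j}$ subproblem, together with the bookkeeping identity $x_{i,j}^{t+1}-\tilde z_j^{t+1} = (x_{i,j}^{t+1}-z_j^{t+1}) + (z_j^{t+1}-\tilde z_j^{t+1})$. There is no new machinery here; the lemma is an accounting device that measures the ``staleness gap'' between the value $\tilde z_j^{t+1}$ actually read by worker $i$ and the up-to-date server-side iterate $z_j^{t+1}$.

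First, I would recall the first-order optimality relation for the local update \eqref{eq:update_x}. Since worker $i$ forms $x_{i,j}^{t+1}$ by minimizing the linearized augmented Lagrangian evaluated at the stale server snapshot $\tilde{\mathbf z}^{t+1}$, setting the partial derivative with respect to $x_{i,j}$ to zero gives
\begin{equation*}
\nabla_j f_i(\tilde{\mathbf z}^{t+1}) + y_{i,j}^{t} + \rho_i\bigl(x_{i,j}^{t+1}-\tilde z_j^{t+1}\bigr) = 0.
\end{equation*}
This is precisely the identity already used at the top of the proof of Lemma~\ref{lem:ybound}, so it is legitimate to reuse it here without further justification.

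Next, I would split the residual $x_{i,j}^{t+1}-\tilde z_j^{t+1}$ by inserting and subtracting $z_j^{t+1}$:
\begin{equation*}
\rho_i\bigl(x_{i,j}^{t+1}-\tilde z_j^{t+1}\bigr) = \rho_i\bigl(x_{i,j}^{t+1}-z_j^{t+1}\bigr) + \rho_i\bigl(z_j^{t+1}-\tilde z_j^{t+1}\bigr).
\end{equation*}
Substituting into the optimality relation and moving the last term to the right-hand side yields
\begin{equation*}
\nabla_j f_i(\tilde{\mathbf z}^{t+1}) + y_{i,j}^{t} + \rho_i\bigl(x_{i,j}^{t+1}-z_j^{t+1}\bigr) = \rho_i\bigl(\tilde z_j^{t+1}-z_j^{t+1}\bigr),
\end{equation*}
which is exactly the claim.

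There is no real obstacle in this proof: the entire content is the observation that the worker minimizes against a delayed copy $\tilde z_j^{t+1}$, while the lemma statement is phrased in terms of the current copy $z_j^{t+1}$, and the gap between them is the right-hand side. The only subtlety worth flagging is consistency with the indexing convention of Lemma~\ref{lem:ybound}: I should make sure to interpret $\tilde{\mathbf z}^{t+1}$ as the snapshot used by worker $i$ when producing $x_{i,j}^{t+1}$ (consistent with the notation already in use in that lemma), so that no shift of the time index is needed. If $(i,j)$ is not performed at epoch $t$, then $x_{i,j}^{t+1}=x_{i,j}^{t}$ and $y_{i,j}^{t+1}=y_{i,j}^{t}$, but the statement of Lemma~4 only concerns the active case, so this does not need separate handling.
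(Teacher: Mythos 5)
Your proposal is correct and matches the paper's own proof essentially line for line: both invoke the first-order optimality condition $\nabla_j f_i(\tilde{\mathbf z}^{t+1}) + y_{i,j}^t + \rho_i(x_{i,j}^{t+1}-\tilde z_j^{t+1})=0$ for the linearized $x$-subproblem and then add and subtract $z_j^{t+1}$ to obtain the stated identity. No further comment is needed.
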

\begin{proof}
Updating $x_{i,j}^{t+1}$ is performed as follows
\begin{equation*}
	x_{i,j}^{t+1} = \mathop{\arg\min}_{x_{i,j}} f_i(\tilde{\mathbf{z}}^{t+1}) + \innerprod{\nabla_j f_{i}(\tilde{\mathbf{z}}^{t+1}), x_{i,j} - \tilde{z}_{j}^{t+1}} + \innerprod{y_{i,j}^t, x_{i,j} - z_{j}^{t+1}} + \frac{\rho_{i}}{2}\norm{x_{i,j} - z_{j}^{t+1}}^2.
\end{equation*}
Thus, we have
\begin{equation*}
	\nabla_j f_{i}(\tilde{\mathbf{z}}^{t+1}) + y_{i,j}^t + \rho_{i}(x_{i,j}^{t+1} - \tilde{z}_{j}^{t+1}) = 0,
\end{equation*}
And therefore
\begin{equation*}
	\nabla_j f_{i}(\tilde{\mathbf{z}}^{t+1}) + y_{i,j}^t + \rho_{i}(x_{i,j}^{t+1} - {z}_{j}^{t+1}) = \rho_{i} (\tilde{z}_{j}^{t+1} - z_{j}^{t+1}).
\end{equation*}
\end{proof}

\begin{lem}
	Suppose Assumption~\ref{asmp:lipschitz}-\ref{asmp:delay} are satisfied. Then we have
\begin{eqnarray}
&\quad& L(\mathbf{X}^{T}, \mathbf{Y}^{T}, \mathbf{z}^{T}) - L(\mathbf{X}^0, \mathbf{Y}^0, \mathbf{z}^0) \\
&\leq& - \sum_{t=0}^{T-1}\sum_{(i, j) \in \mathcal{E}}\beta_{i} (\norm{x_{i,j}^{t+1}-\tilde{z}_j^{t+1}}^2 + \norm{x_{i,j}^{t} - {z}_j^{t+1s} }^2) - \sum_{t=0}^{T-1}\sum_{(i,j)\in\mathcal{E}} \alpha_{j} \norm{z_{j}^{t+1} - z_{j}^{t}}^2,
\label{eq:xzbound}
\end{eqnarray}
where
\begin{align}
	\alpha_j &:= (\gamma + \rho_i) - \sum_{i\in\mathcal{N}(j)}\left(\frac{1}{\rho_i} + \frac{1}{2} \right) L_{i,j}^2 (T_{i,j}+1)^2  - \sum_{i \in \mathcal{N}(j)} \frac{(4L_{i,j}+\rho_i+1)T_{i,j}^2}{2}, \\
	\beta_i &:= \frac{\rho_i - \max_{j \in \mathcal{N}(i)} 4L_{i,j}}{2|\mathcal{N}(i)|}.
\end{align}
\label{lem:iteration}
\end{lem}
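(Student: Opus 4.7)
The plan is to prove a one-step descent estimate for the Augmented Lagrangian and then telescope from $t=0$ to $T-1$ to obtain \eqref{eq:xzbound}. For each $t$ I would decompose the one-step change $L(\mathbf{X}^{t+1}, \mathbf{Y}^{t+1}, \mathbf{z}^{t+1}) - L(\mathbf{X}^{t}, \mathbf{Y}^{t}, \mathbf{z}^{t})$ into the three standard pieces by inserting intermediate states: the change across the $\mathbf{X}$-update alone (holding $\mathbf{Y}^t, \mathbf{z}^t$ fixed), across the $\mathbf{Y}$-update alone (holding $\mathbf{X}^{t+1}, \mathbf{z}^t$ fixed), and across the $\mathbf{z}$-update alone (holding $\mathbf{X}^{t+1}, \mathbf{Y}^{t+1}$ fixed). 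Because the algorithm is block-wise and asynchronous, only the coordinate $(i,j_t)$ actually updated at epoch $t$ contributes in each per-$(i,j)$ sum; the remaining terms cancel trivially.

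For the $\mathbf{X}$-change I would exploit the fact that $x^{t+1}_{i,j}$ exactly minimizes a quadratic model of the Augmented Lagrangian in which $f_i$ is replaced by its first-order expansion at $\tilde{\mathbf{z}}^{t+1}$; that quadratic is strongly convex in $x_{i,j}$ with modulus $\rho_i$. Combining the corresponding optimality inequality with the descent lemma for $f_i$ (Assumption~\ref{asmp:lipschitz}) yields a decrease whose leading coefficient scales like $\rho_i - 2 L_{i,j}$ on $\norm{x^{t+1}_{i,j} - x^t_{i,j}}^2$, plus a controllable cross term $L_{i,j}\norm{\tilde{z}_j^{t+1} - z_j^{t+1}}^2$ created by evaluating the linearization at the stale point $\tilde{\mathbf{z}}^{t+1}$ rather than at $\mathbf{z}^{t+1}$. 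The additional factor $4 L_{i,j}$ appearing inside $\beta_i$ arises by applying Young's inequality to split that cross term and to rewrite $\norm{x^{t+1}_{i,j}-x^t_{i,j}}^2$ as a weighted combination of $\norm{x^{t+1}_{i,j}-\tilde{z}_j^{t+1}}^2$ and $\norm{x^t_{i,j}-z_j^{t+1}}^2$, matching the form on the right-hand side.

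For the $\mathbf{Y}$-change, $y^{t+1}_{i,j}-y^t_{i,j}=\rho_i(x^{t+1}_{i,j}-\tilde{z}^{t+1}_j)$ together with the linearity of $L$ in $\mathbf{Y}$ gives the positive increment $\sum_{(i,j)}\tfrac{1}{\rho_i}\norm{y^{t+1}_{i,j}-y^t_{i,j}}^2$. Invoking Lemma~\ref{lem:ybound} upper-bounds this by $\tfrac{L_{i,j}^2(T_{i,j}+1)}{\rho_i}\sum_{t'=0}^{T_{i,j}}\norm{z_j^{t+1-t'}-z_j^{t-t'}}^2$, and after reindexing and telescoping over $t$ each $\norm{z_j^{s+1}-z_j^s}^2$ is touched at most $T_{i,j}+1$ times, producing the negative contribution $(1/\rho_i+1/2)L_{i,j}^2(T_{i,j}+1)^2\norm{z_j^{t+1}-z_j^t}^2$ that appears in $\alpha_j$. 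For the $\mathbf{z}$-change I would use strong convexity of the prox step with modulus $\gamma+\sum_{i\in\mathcal{N}(j)}\rho_i$, then invoke Lemma~\ref{lem:4} to absorb the discrepancy between $z^{t+1}_j$ and the stale $\tilde{z}^{t+1}_j$; bounding $\norm{\tilde{z}_j^{t+1}-z_j^{t+1}}^2$ by a sum of at most $T_{i,j}$ consecutive increments $\norm{z_j^{s+1}-z_j^s}^2$ and applying Young's inequality produces the remaining $(4 L_{i,j}+\rho_i+1)T_{i,j}^2/2$ term in $\alpha_j$.

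The main obstacle I anticipate is the bookkeeping rather than any single non-trivial inequality: the three sub-pieces each contribute several cross terms involving $\norm{x^{t+1}_{i,j}-\tilde{z}^{t+1}_j}^2$, $\norm{x^t_{i,j}-z^{t+1}_j}^2$, $\norm{\tilde{z}^{t+1}_j-z^{t+1}_j}^2$, and windowed sums $\sum_{t'}\norm{z_j^{s-t'+1}-z_j^{s-t'}}^2$ that must be shifted by up to $T_{i,j}$ epochs and then balanced against one another with carefully chosen Young's weights, so that after telescoping the windowed sums collapse into a single $\norm{z_j^{t+1}-z_j^t}^2$ with coefficient exactly $\alpha_j$ and the primal residuals consolidate with coefficient exactly $\beta_i$. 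Landing on the precise coefficients stated — as opposed to looser ones — is the heart of the argument and the reason the hypotheses on $\rho_i$ and $\gamma$ take the form they do.
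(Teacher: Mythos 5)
Your plan matches the paper's proof in all essentials: the same three-way splitting of the per-epoch Lagrangian change into $\mathbf{X}$-, $\mathbf{Y}$- and $\mathbf{z}$-pieces, the same use of the descent lemma plus strong convexity of the quadratic subproblems, Lemma~\ref{lem:ybound} to convert dual increments into windowed sums of $\norm{z_j^{s+1}-z_j^{s}}^2$, the bounded-delay assumption to control $\norm{\tilde{z}_j^{t+1}-z_j^{t+1}}^2$, and a telescoping step in which each increment is counted at most $T_{i,j}+1$ times to produce the $(T_{i,j}+1)^2$ and $T_{i,j}^2$ factors in $\alpha_j$. The only differences are cosmetic: the paper inserts the intermediate states in the order $\mathbf{z}$, then $\mathbf{X}$, then $\mathbf{Y}$ (so its $\mathbf{X}$- and $\mathbf{Y}$-pieces are evaluated at $z_j^{t+1}$ rather than $z_j^{t}$), and the $(4L_{i,j}+\rho_i+1)T_{i,j}^2/2$ contribution to $\alpha_j$ actually originates from the $\mathbf{X}$- and $\mathbf{Y}$-pieces rather than from the $\mathbf{z}$-piece as you suggest --- neither of which changes the substance.
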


\begin{lem}
	Suppose that Assumption~\ref{asmp:lipschitz}-\ref{asmp:delay} hold. Then the sequence of solutions $\{\mathbf{X}^t, \mathbf{z}^t, \mathbf{Y}^t\}$ satisfies
	\begin{equation}
		\lim_{t \to \infty}L(\mathbf{X}^t, \mathbf{z}^t, \mathbf{Y}^t) \geq \underline{f} - \mathrm{diam}^2(\mathcal{X}) \sum_{(i,j)\in \mathcal{E}} \frac{L_{i,j}}{2} > -\infty.
	\end{equation}
	\label{lem:boundedbelow}
\end{lem}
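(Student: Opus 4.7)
The plan is to derive a uniform (in $t$) lower bound on the Lagrangian and then pass to the limit. The main idea is to use the first-order optimality of the $x$-subproblem to eliminate the dual variables $y_{i,j}^t$ from the Lagrangian, then apply the block-Lipschitz descent lemma to convert terms involving $f_i(\mathbf{x}_i^t)$ into $f_i(\mathbf{z}^t)$ plus controllable error, so that compactness of $\mathcal{X}$ caps the negative contribution.

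First I would write out
$L(\mathbf{X}^t,\mathbf{Y}^t,\mathbf{z}^t) = \sum_i f_i(\mathbf{x}_i^t) + h(\mathbf{z}^t) + \sum_{(i,j)\in\mathcal{E}}\innerprod{y_{i,j}^t,x_{i,j}^t-z_j^t} + \sum_{(i,j)\in\mathcal{E}}\tfrac{\rho_i}{2}\norm{x_{i,j}^t-z_j^t}^2$
and use Lemma~\ref{lem:4} (or directly the first-order optimality of the $x$-update together with \eqref{eq:update_y}) to deduce $y_{i,j}^{t} = -\nabla_j f_i(\tilde{\mathbf{z}}^{t_0})$ for the last epoch $t_0\le t$ at which block $(i,j)$ was touched. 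Substituting this identity and adding/subtracting $\nabla_j f_i(\mathbf{z}^t)$, the cross term becomes $-\sum_{(i,j)}\innerprod{\nabla_j f_i(\mathbf{z}^t),x_{i,j}^t-z_j^t}$ plus a residual controlled by $\norm{\mathbf{z}^t-\tilde{\mathbf{z}}^{t_0}}$.

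Next I would apply Assumption~\ref{asmp:lipschitz} in the form of the block descent inequality
\[
 f_i(\mathbf{x}_i^t) \geq f_i(\mathbf{z}^t) + \sum_{j\in\mathcal{N}(i)}\innerprod{\nabla_j f_i(\mathbf{z}^t), x_{i,j}^t - z_j^t} - \sum_{j\in\mathcal{N}(i)}\tfrac{L_{i,j}}{2}\norm{x_{i,j}^t-z_j^t}^2,
\]
so the linear terms cancel with the cross term, leaving
\[
 L(\mathbf{X}^t,\mathbf{Y}^t,\mathbf{z}^t) \geq \sum_i f_i(\mathbf{z}^t) + h(\mathbf{z}^t) + \sum_{(i,j)\in\mathcal{E}}\tfrac{\rho_i - L_{i,j}}{2}\norm{x_{i,j}^t-z_j^t}^2 + R_t,
\]
where $R_t$ collects the staleness residuals. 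I would then lower bound $\sum_i f_i(\mathbf{z}^t)+h(\mathbf{z}^t)\ge \underline{f}$ (by definition of $\underline{f}$, using feasibility of $\mathbf{z}^t\in\mathcal{X}$ at the collapsed point $x_{i,j}=z_j$), drop the non-negative $\tfrac{\rho_i}{2}\norm{x_{i,j}^t-z_j^t}^2$ part, and for the remaining negative $-\tfrac{L_{i,j}}{2}\norm{x_{i,j}^t-z_j^t}^2$ invoke compactness: $\norm{x_{i,j}^t-z_j^t}^2 \le \mathrm{diam}^2(\mathcal{X})$. The staleness remainder $R_t$ is bounded by Assumption~\ref{asmp:delay} plus compactness/Lipschitzness (it is a finite sum of norms of differences of iterates lying in $\mathcal{X}$), so it contributes a finite constant that can be absorbed into the displayed bound.

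The main obstacle is the bookkeeping around staleness: $y_{i,j}^t$ encodes a gradient at $\tilde{\mathbf{z}}^{t_0}$ rather than at $\mathbf{z}^t$, and the block-Lipschitz inequality above is naturally stated as a full first-order bound. I would handle both subtleties by (i) telescoping the descent inequality along blocks so only the block-wise constants $L_{i,j}$ appear, and (ii) bounding each staleness residual term using $\norm{\mathbf{z}^t-\tilde{\mathbf{z}}^{t_0}} \le \sum_{s} \norm{z_j^{s+1}-z_j^s}$ over the delay window, each summand being at most $\mathrm{diam}(\mathcal{X})$ by compactness, so the residual is a uniform finite constant. Combining these, the bound holds for every $t$, and taking $t\to\infty$ yields the claimed inequality and in particular $\lim_t L(\mathbf{X}^t,\mathbf{Y}^t,\mathbf{z}^t) > -\infty$.
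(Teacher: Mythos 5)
Your proposal is correct and follows essentially the same route as the paper's proof: substitute $y_{i,j}=-\nabla_j f_i(\tilde{\mathbf{z}})$ from the $x$-optimality condition, use the block-Lipschitz descent inequality to trade $f_i(\mathbf{x}_i^t)$ for $f_i(\mathbf{z}^t)$ so the linear cross terms cancel, lower-bound $\sum_i f_i(\mathbf{z}^t)+h(\mathbf{z}^t)\ge\underline{f}$, and absorb the staleness and quadratic residuals via compactness of $\mathcal{X}$, yielding the $-\mathrm{diam}^2(\mathcal{X})\sum_{(i,j)}L_{i,j}/2$ correction. The only (immaterial) difference is bookkeeping: the paper keeps $\tfrac{\rho_i-4L_{i,j}}{2}\norm{x_{i,j}-z_j}^2\ge 0$ and puts the diameter bound on $\norm{\tilde{\mathbf{z}}-\mathbf{z}}^2$, whereas you drop the $\rho_i$ part and put it on $\norm{x_{i,j}-z_j}^2$.
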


\section{Proof of Lemma~\ref{lem:iteration}}
Next we try to bound the gap between two consecutive Augmented Lagrangian values and break it down into three steps, namely, updating $\mathbf{X}$, $\mathbf{Y}$ and $\mathbf{z}$:
\begin{equation}
\begin{split}
& L(\mathbf{X}^{t+1}, \mathbf{Y}^{t+1}, \mathbf{z}^{t+1}) - L(\mathbf{X}^t, \mathbf{Y}^t, \mathbf{z}^t) \\
&= L(\mathbf{X}^{t}, \mathbf{Y}^{t}, \mathbf{z}^{t+1}) - L(\mathbf{X}^t, \mathbf{Y}^t, \mathbf{z}^t) \\
&\quad + L(\mathbf{X}^{t+1}, \mathbf{Y}^{t}, \mathbf{z}^{t+1}) - L(\mathbf{X}^{t}, \mathbf{Y}^t, \mathbf{z}^{t+1}) \\
&\quad + L(\mathbf{X}^{t+1}, \mathbf{Y}^{t+1}, \mathbf{z}^{t+1}) - L(\mathbf{X}^{t+1}, \mathbf{Y}^{t}, \mathbf{z}^{t+1}).
\end{split}
\end{equation}

To prove Lemma~\ref{lem:iteration}, we bound the above three gaps individually. Firstly, we bound the on $\mathbf{X}$. For each worker $i$, at epoch $t$, we use the following auxiliary function for convergence analysis:
\begin{equation}
	l_i(\mathbf{x}_i, y_{i,j}, z_j) := f_i(\mathbf{x}_i) +  \innerprod{y_{i,j}, x_{i,j} - z_{j}} + \frac{\rho_{i}}{2}\norm{x_{i,j} - z_{j}}^2 \label{eq:lem2-tmp-1}
\end{equation}
To simplify our proof in this section, we consider the case that only one block is updated. Therefore, only block $j$ in $\mathbf{x}_i^{t+1}$ differs from $\mathbf{x}_i^{t}$, and similarly for $\mathbf{y}_i^{t+1}$ and $\mathbf{z}^{t+1}$. We will use $\tilde{\mathbf{z}}^{t}:=\mathbf{z}^{t(i,j)}$ as the delayed version of $\mathbf{z}$ in this proof.

\begin{lem}
	For node $i$, we have the following inequality to bound the gap after updating ${x}_{i,j}$:
	\begin{equation}
	\begin{split}
	l_i(\mathbf{x}_i^{t+1}, y_{i,j}^t, z_j^{t+1}) - l_i(\mathbf{x}_i^t, y_{i,j}^t, z_j^{t+1}) 
	&\leq \frac{L_{i,j}}{2}\norm{x_{i,j}^{t+1}-\tilde{z}_j^{t+1}}^2 + \frac{L_{i,j}+\rho_{i}}{2} \norm{\tilde{z}_j^{t+1} - z_j^{t+1}}^2 + \frac{L_{i,j}}{2} \norm{x_{i,j}^{t+1} - {x}_{i,j}^{t}}^2  \\
	&\quad - \frac{\rho_{i}}{2}\left( \norm{x_{i,j}^{t} - z_j^{t+1} }^2
	 + \norm{x_{i,j}^{t+1} - \tilde{z}_j^{t+1} }^2  \right).
	\end{split}
	\end{equation}
	\label{lem:diff1-1}
\end{lem}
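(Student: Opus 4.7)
The plan is to expand the left-hand side into three pieces---the gap in $f_i$, the linear term $\langle y_{i,j}^t, x_{i,j}^{t+1} - x_{i,j}^t\rangle$, and the quadratic gap $\tfrac{\rho_i}{2}(\|x_{i,j}^{t+1} - z_j^{t+1}\|^2 - \|x_{i,j}^t - z_j^{t+1}\|^2)$---and then combine three ingredients: the block-wise Lipschitz descent lemma from Assumption~\ref{asmp:lipschitz}, the optimality relation established in Lemma~\ref{lem:4}, and the cosine identity $2\langle a,b\rangle = \|a\|^2+\|b\|^2-\|a-b\|^2$.

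First I would apply the block descent lemma (since only coordinate $j$ changes between $\mathbf{x}_i^t$ and $\mathbf{x}_i^{t+1}$) to get
$f_i(\mathbf{x}_i^{t+1}) - f_i(\mathbf{x}_i^t) \leq \langle \nabla_j f_i(\mathbf{x}_i^t), x_{i,j}^{t+1} - x_{i,j}^t\rangle + \tfrac{L_{i,j}}{2}\|x_{i,j}^{t+1} - x_{i,j}^t\|^2.$
Next I would split the inner product by writing $\nabla_j f_i(\mathbf{x}_i^t) = \nabla_j f_i(\tilde{\mathbf{z}}^{t+1}) + \bigl[\nabla_j f_i(\mathbf{x}_i^t) - \nabla_j f_i(\tilde{\mathbf{z}}^{t+1})\bigr]$. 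Substituting $\nabla_j f_i(\tilde{\mathbf{z}}^{t+1}) = -y_{i,j}^t - \rho_i(x_{i,j}^{t+1} - \tilde{z}_j^{t+1})$ from Lemma~\ref{lem:4} cancels the dual-variable linear term appearing in the expansion of the LHS and leaves a cross product $-\rho_i\langle x_{i,j}^{t+1} - \tilde{z}_j^{t+1}, x_{i,j}^{t+1} - x_{i,j}^t\rangle$, which I would expand via the cosine identity into a sum of three squared norms; one of these is precisely $-\tfrac{\rho_i}{2}\|x_{i,j}^{t+1} - \tilde{z}_j^{t+1}\|^2$. The residual gradient-difference inner product is controlled by Cauchy--Schwarz together with the block Lipschitz bound $\|\nabla_j f_i(\mathbf{x}_i^t) - \nabla_j f_i(\tilde{\mathbf{z}}^{t+1})\| \leq L_{i,j}\|x_{i,j}^t - \tilde{z}_j^{t+1}\|$ and a Young's inequality with weight chosen to match a $\tfrac{L_{i,j}}{2}\|x_{i,j}^{t+1} - x_{i,j}^t\|^2$ contribution.

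The final step is to reconcile the remaining mixture of $\tilde z_j^{t+1}$ and $z_j^{t+1}$ in the squared norms against the target expression. I would insert $\pm z_j^{t+1}$ into each $\|\cdot - \tilde z_j^{t+1}\|^2$ and use Young's inequality to split the cross terms, so that they turn into combinations of $\|\cdot - z_j^{t+1}\|^2$ and $\|\tilde z_j^{t+1} - z_j^{t+1}\|^2$. After collecting everything, the coefficients should land at $\tfrac{L_{i,j}}{2}$, $\tfrac{L_{i,j}+\rho_i}{2}$, $\tfrac{L_{i,j}}{2}$, $-\tfrac{\rho_i}{2}$ and $-\tfrac{\rho_i}{2}$ on the five squared norms appearing in the statement.

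The main obstacle is not conceptual but bookkeeping: different Young's-inequality weights redistribute mass between $\|x_{i,j}^{t+1} - \tilde z_j^{t+1}\|^2$ and $\|\tilde z_j^{t+1} - z_j^{t+1}\|^2$, and only a specific calibration recovers the exact constants claimed. A loose split would leave slack that would need to be absorbed later, weakening the cumulative descent estimate fed into Lemma~\ref{lem:iteration}, so choosing the right Young weight on the first pass is the delicate point of the argument.
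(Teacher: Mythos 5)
Your plan matches the paper's proof in every essential ingredient: the block descent lemma for $f_i$, substitution of the first-order optimality relation from Lemma~\ref{lem:4} to eliminate the dual term, a three-point/cosine expansion of the resulting cross product (which the paper phrases as strong convexity of the quadratic surrogate, but it is the same algebra), and Cauchy--Schwarz plus Young with the Lipschitz bound to absorb the gradient-difference residual and the $\pm z_j^{t+1}$ insertions. This is essentially the paper's own argument, including the final calibration of Young weights that yields the stated coefficients.
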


\begin{proof}
	From the block Lipschitz assumption and the updating rule, we have
\begin{align}
	f_i(\mathbf{x}_i^{t+1}) \leq f_i(\mathbf{x}^{t}) + \innerprod{\nabla_j f_i(\mathbf{x}^{t}), {x}_{i,j}^{t+1} - {x}_{i,j}^{t}}
	+ \frac{L_{i,j}}{2} \norm{x_{i,j}^{t+1} - {x}_{i,j}^{t}}^2.
\end{align}
By the definition of $\l_i(\cdot)$, we have
\begin{align}
	l_i(\mathbf{x}_i^{t+1}, y_{i,j}^t, z_j^{t+1}) 
	&\leq l_i(\mathbf{x}_i^t, y_{i,j}^t, z_j^{t+1}) + \innerprod{\nabla_j f_i(\mathbf{x}^{t}), {x}_{i,j}^{t+1} - {x}_{i,j}^{t}}
	+ \frac{L_{i,j}}{2} \norm{x_{i,j}^{t+1} - {x}_{i,j}^{t}}^2 + \innerprod{y_j^t, x_{i,j}^{t+1} - x_{i,j}^t} \nonumber \\
	&\quad + \frac{\rho_{i}}{2}\norm{x_{i,j}^{t+1} - z_j^{t+1} }^2 - \frac{\rho_{i}}{2}\norm{x_{i,j}^{t} - z_j^{t+1} }^2 \\
	&=  l_i(\mathbf{x}_i^t, y_{i,j}^t, z_j^{t+1})  + \frac{L_{i,j}}{2} \norm{x_{i,j}^{t+1} - {x}_{i,j}^{t}}^2 + \innerprod{\nabla_j f_i(\mathbf{x}^{t})+y_j^t, {x}_{i,j}^{t+1} - {x}_{i,j}^{t}} \nonumber \\
	&\quad + \frac{\rho_{i}}{2}\left( \norm{x_{i,j}^{t+1} - z_j^{t+1} }^2 - \norm{x_{i,j}^{t} - z_j^{t+1} }^2 \right).
\end{align}
The right hand side is actually a quadratic function w.r.t. $x_{i,j}^{t+1}$. Therefore, it is strongly convex, and we have
\begin{align}
	l_i(\mathbf{x}_i^{t+1}, y_{i,j}^t, z_j^{t+1}) 
	&\leq l_i(\mathbf{x}_i^t, y_{i,j}^t, z_j^{t+1}) + \frac{L_{i,j}}{2} \norm{x_{i,j}^{t+1} - {x}_{i,j}^{t}}^2 + \frac{\rho_{i}}{2}\norm{\tilde{z}_j^{t+1} - z_j^{t+1}}^2 - \frac{\rho_{i}}{2}\norm{x_{i,j}^{t} - z_j^{t+1} }^2 \nonumber \\
	&\quad + \innerprod{\nabla_j f_i(\mathbf{x}_i^{t+1})+y_j^t + \rho_{i}(x_{i,j}^{t+1}-z_j^{t+1}), \tilde{z}_j^{t+1} - x_{i,j}^t} - \frac{\rho_{i}}{2} \norm{x_{i,j}^{t+1} - \tilde{z}_j^{t+1} }^2 
\end{align}
By Lemma~\ref{lem:4}, we have
\begin{align*}
	\nabla_j f_i(\mathbf{x}_i^{t+1})+y_j^t + \rho_{i}(x_{i,j}^{t+1}-z_j^{t+1})
 	=& \nabla_j f_i(\tilde{\mathbf{z}}^{t+1})+y_j^t + \rho_{i}(x_{i,j}^{t+1}-z_j^{t+1}) + (\nabla_j f_i(\mathbf{x}_i^{t+1}) - \nabla_j f_i(\tilde{\mathbf{z}}^{t+1})) \\
 	=& \rho_{i}(\tilde{z}_j^{t+1} - z_j^{t+1}) + (\nabla_j f_i(\mathbf{x}_i^{t+1}) - \nabla_j f_i(\tilde{\mathbf{z}}^{t+1})).
\end{align*}
Therefore, we have
\begin{align*}
	l_i(\mathbf{x}_i^{t+1}, y_{i,j}^t, z_j^{t+1}) 
	&\leq l_i(\mathbf{x}_i^t, y_{i,j}^t, z_j^{t+1}) + \innerprod{\nabla_j f_i(\mathbf{x}_i^{t+1}) - \nabla_j f_i(\tilde{\mathbf{z}}^{t+1}), \tilde{z}_j^{t+1} - z_j^{t+1}} \\
	&\quad + \frac{L_{i,j}}{2} \norm{x_{i,j}^{t+1} - {x}_{i,j}^{t}}^2 - \frac{\rho_{i}}{2}\norm{x_{i,j}^{t} - z_j^{t+1} }^2 - \frac{\rho_{i}}{2} \norm{x_{i,j}^{t+1} - \tilde{z}_j^{t+1} }^2 + \frac{\rho_{i}}{2}\norm{\tilde{z}_j^{t+1} - z_j^{t+1}}^2 \\
	&\leq l_i(\mathbf{x}_i^t, y_{i,j}^t, z_j^{t+1}) + \frac{L_{i,j}}{2}\norm{x_{i,j}^{t+1}-\tilde{z}_j^{t+1}}^2 + \frac{L_{i,j}+\rho_{i}}{2} \norm{\tilde{z}_j^{t+1} - z_j^{t+1}}^2 + \frac{L_{i,j}}{2} \norm{x_{i,j}^{t+1} - {x}_{i,j}^{t}}^2 \nonumber \\
	&\quad - \frac{\rho_{i}}{2}\left( \norm{x_{i,j}^{t} - z_j^{t+1} }^2
	 + \norm{x_{i,j}^{t+1} - \tilde{z}_j^{t+1} }^2  \right) \\
	&\leq l_i(\mathbf{x}_i^t, y_{i,j}^t, z_j^{t+1}) + \frac{4L_{i,j}}{2}\norm{x_{i,j}^{t+1}-\tilde{z}_j^{t+1}}^2 + \frac{4L_{i,j}+\rho_{i}}{2} \norm{\tilde{z}_j^{t+1} - z_j^{t+1}}^2 +\frac{3L_{i,j}}{2} \norm{x_{i,j}^t - z_j^{t+1}} \nonumber \\
	&\quad - \frac{\rho_{i}}{2}\left( \norm{x_{i,j}^{t} - z_j^{t+1} }^2
	 + \norm{x_{i,j}^{t+1} - \tilde{z}_j^{t+1} }^2  \right) \\
	&\leq l_i(\mathbf{x}_i^t, y_{i,j}^t, z_j^{t+1}) - \left( \frac{\rho_{i}}{2}- \frac{4L_{i,j}}{2} \right)\norm{x_{i,j}^{t+1}-\tilde{z}_j^{t+1}}^2 - \left(\frac{\rho_{i}}{2} - \frac{3L_{i,j}}{2} \right) \norm{x_{i,j}^{t} - {z}_j^{t+1} }^2  \\
	&\quad + \frac{4L_{i,j}+\rho_{i}}{2} \norm{\tilde{z}_j^{t+1} - z_j^{t+1}}^2 
\end{align*}
\end{proof}

\begin{cor}
If block $j$ is randomly drawn from uniform distribution, we have
\begin{equation}
\begin{split}
	\mathbb{E}_{j}[L_i(\mathbf{x}_i^{t+1}, \mathbf{y}_i^t, \mathbf{z}^{t+1})]
	&\leq \mathbb{E}_{j}[L_i(\mathbf{x}_i^{t}, \mathbf{y}_i^t, \mathbf{z}^{t+1})]
    - \frac{1}{|\mathcal{N}(i)|}\sum_{j\in \mathcal{N}(i)} \left( \frac{\rho_{i}}{2}- \frac{4L_{i,j}}{2} \right)\norm{x_{i,j}^{t+1}-\tilde{z}_j^{t+1}}^2 \\
    &\quad - \frac{1}{|\mathcal{N}(i)|}\sum_{j\in \mathcal{N}(i)} \left(\frac{\rho_{i}}{2} - \frac{3L_{i,j}}{2} \right) \norm{x_{i,j}^{t} - {z}_j^{t+1} }^2 + \frac{1}{|\mathcal{N}(i)|}\sum_{j\in \mathcal{N}(i)} \frac{4L_{i,j}+\rho_{i}}{2} \norm{\tilde{z}_j^{t+1} - z_j^{t+1}}^2 
\end{split}
\label{eq:diff_x_whole}
\end{equation}
\label{cor:diff1-1}
\end{cor}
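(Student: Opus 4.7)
}

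The plan is to read $L_i(\mathbf{x}_i,\mathbf{y}_i,\mathbf{z})$ as worker $i$'s contribution to the augmented Lagrangian, namely
\[
L_i(\mathbf{x}_i,\mathbf{y}_i,\mathbf{z}) \;=\; f_i(\mathbf{x}_i) \;+\; \sum_{j'\in\mathcal{N}(i)} \Bigl( \innerprod{y_{i,j'},\,x_{i,j'}-z_{j'}} \;+\; \tfrac{\rho_i}{2}\norm{x_{i,j'}-z_{j'}}^2 \Bigr),
\]
i.e.\ a sum over all neighboring blocks of the per-block quantity appearing in \eqref{eq:lem2-tmp-1}. The key observation is that under the random block-selection rule, exactly one block index $j\in\mathcal{N}(i)$ is updated in epoch $t$, so $\mathbf{x}_i^{t+1}$ and $\mathbf{x}_i^{t}$ differ only in that single coordinate block, while $\mathbf{y}_i^t$ and $\mathbf{z}^{t+1}$ are held fixed on the left and right sides of the bound we want.

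First I would show that, conditional on block $j$ being the one chosen, the change in $L_i$ collapses to the change in the single-block function $l_i$ of Lemma~\ref{lem:diff1-1}. Indeed, for any $j'\neq j$ we have $x_{i,j'}^{t+1}=x_{i,j'}^{t}$, so the corresponding inner-product and quadratic terms in the sum defining $L_i$ cancel; only the $j$-th summand contributes, and $f_i(\mathbf{x}_i^{t+1})-f_i(\mathbf{x}_i^{t})$ is already accounted for inside $l_i$. Hence
\[
L_i(\mathbf{x}_i^{t+1},\mathbf{y}_i^t,\mathbf{z}^{t+1}) - L_i(\mathbf{x}_i^{t},\mathbf{y}_i^t,\mathbf{z}^{t+1}) \;=\; l_i(\mathbf{x}_i^{t+1}, y_{i,j}^t, z_j^{t+1}) - l_i(\mathbf{x}_i^{t}, y_{i,j}^t, z_j^{t+1}).
\]
Applying Lemma~\ref{lem:diff1-1} to the right-hand side gives the per-$j$ descent bound with the three terms in $\norm{x_{i,j}^{t+1}-\tilde z_j^{t+1}}^2$, $\norm{x_{i,j}^{t}-z_j^{t+1}}^2$, and $\norm{\tilde z_j^{t+1}-z_j^{t+1}}^2$.

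Next I would take the expectation over the uniform choice of $j$ in $\mathcal{N}(i)$. Since $\Pr(j \text{ chosen}) = 1/|\mathcal{N}(i)|$ for each $j\in\mathcal{N}(i)$, linearity of expectation turns the conditional bound into the $\frac{1}{|\mathcal{N}(i)|}\sum_{j\in\mathcal{N}(i)}$ averages that appear in the statement of the corollary. Adding the (nonrandom) quantity $\mathbb{E}_j[L_i(\mathbf{x}_i^{t},\mathbf{y}_i^t,\mathbf{z}^{t+1})]=L_i(\mathbf{x}_i^{t},\mathbf{y}_i^t,\mathbf{z}^{t+1})$ to both sides reproduces \eqref{eq:diff_x_whole} verbatim.

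I do not expect a real obstacle here; the corollary is essentially bookkeeping on top of Lemma~\ref{lem:diff1-1}. The only delicate point to state carefully is the decomposition argument that lets a single-block update bound transfer from $l_i$ to the full local Lagrangian $L_i$: one must be precise that the sum over $j'\in\mathcal{N}(i)$ in $L_i$ telescopes to the single $j$-th contribution when only block $j$ is updated, and that the Lipschitz bound on $f_i$ in Lemma~\ref{lem:diff1-1} already captures the change in $f_i$ appearing in $L_i$ so nothing is double-counted.
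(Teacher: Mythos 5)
Your proof is correct and is essentially the argument the paper intends (the paper itself gives no explicit proof of this corollary): the difference of $L_i$ reduces to the difference of the single-block function $l_i$ because only the chosen block changes, Lemma~\ref{lem:diff1-1} bounds that difference, and averaging over the uniform choice of $j\in\mathcal{N}(i)$ with probability $1/|\mathcal{N}(i)|$ yields \eqref{eq:diff_x_whole}. The one point to state carefully is that the coefficients $\tfrac{\rho_i-4L_{i,j}}{2}$, $\tfrac{\rho_i-3L_{i,j}}{2}$ and $\tfrac{4L_{i,j}+\rho_i}{2}$ appearing in the corollary come from the \emph{final} inequality in the proof of Lemma~\ref{lem:diff1-1}, where the term $\tfrac{L_{i,j}}{2}\norm{x_{i,j}^{t+1}-x_{i,j}^{t}}^2$ has already been absorbed into the other three squared norms, not from the lemma's stated form; your invocation of "the three terms" implicitly uses that strengthened version, and you should cite it explicitly so the bookkeeping closes.
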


\begin{lem}
	For node i, we have the following inequality to bound the gap after updating $\mathbf{y}_i$:
	\[l_i(\mathbf{x}_i^{t+1}, y_{i,j}^t, z_j^{t+1}) - l_i(\mathbf{x}_i^{t}, y_{i,j}^t, z_j^{t+1}) \leq \left(\frac{1}{\rho_{i}} + \frac{1}{2} \right) \norm{y_{i,j}^{t+1} - y_{i,j}^{t}}^2 + \frac{1}{2}\norm{\tilde{z}_{j}^{t+1} - z_{j}^{t+1}}^2.\]
	\label{lem:diff1-2}
\end{lem}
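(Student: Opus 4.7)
My plan is to first note that the statement as written appears to have a typographical inconsistency: both evaluations on the LHS involve $y_{i,j}^t$, which would make the LHS identically zero, while the RHS is nontrivial. Given the decomposition of the Augmented Lagrangian gap in the proof of Lemma~\ref{lem:iteration}, the intended LHS is plainly $l_i(\mathbf{x}_i^{t+1}, y_{i,j}^{t+1}, z_j^{t+1}) - l_i(\mathbf{x}_i^{t+1}, y_{i,j}^t, z_j^{t+1})$, which I will proceed to bound.

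The first step is to substitute the definition \eqref{eq:lem2-tmp-1}. Since the $f_i$ term and the quadratic $\frac{\rho_i}{2}\norm{x_{i,j}^{t+1}-z_j^{t+1}}^2$ are identical on the two sides, everything collapses to a single inner product:
\[
l_i(\mathbf{x}_i^{t+1}, y_{i,j}^{t+1}, z_j^{t+1}) - l_i(\mathbf{x}_i^{t+1}, y_{i,j}^t, z_j^{t+1}) = \innerprod{y_{i,j}^{t+1} - y_{i,j}^t,\, x_{i,j}^{t+1} - z_j^{t+1}}.
\]

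Next I would use the dual update rule \eqref{eq:update_y}, which in the asynchronous setting reads $y_{i,j}^{t+1} = y_{i,j}^t + \rho_i(x_{i,j}^{t+1} - \tilde z_j^{t+1})$. Rearranging gives $x_{i,j}^{t+1} - \tilde z_j^{t+1} = \frac{1}{\rho_i}(y_{i,j}^{t+1} - y_{i,j}^t)$. I would then split
\[
x_{i,j}^{t+1} - z_j^{t+1} = (x_{i,j}^{t+1} - \tilde z_j^{t+1}) + (\tilde z_j^{t+1} - z_j^{t+1}) = \tfrac{1}{\rho_i}(y_{i,j}^{t+1} - y_{i,j}^t) + (\tilde z_j^{t+1} - z_j^{t+1}),
\]
which turns the inner product into a perfect square $\frac{1}{\rho_i}\norm{y_{i,j}^{t+1} - y_{i,j}^t}^2$ plus a cross term $\innerprod{y_{i,j}^{t+1} - y_{i,j}^t,\, \tilde z_j^{t+1} - z_j^{t+1}}$. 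A single application of Young's inequality $\innerprod{a,b} \le \tfrac12\norm{a}^2 + \tfrac12\norm{b}^2$ to the cross term yields exactly the claimed bound.

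There is essentially no obstacle; the argument is pure algebra once the statement is correctly interpreted. The only subtle point worth highlighting is \emph{why} the second term $\frac{1}{2}\norm{\tilde z_j^{t+1} - z_j^{t+1}}^2$ appears at all: it is precisely the footprint of asynchrony, arising from the mismatch between the stale $\tilde z_j^{t+1}$ used by worker $i$ in its update of $y_{i,j}$ and the current server-side iterate $z_j^{t+1}$. In the synchronous limit $\tilde z_j^{t+1} = z_j^{t+1}$ and the bound degenerates to the familiar $\tfrac{1}{\rho_i}\norm{y_{i,j}^{t+1} - y_{i,j}^t}^2$, which will subsequently be controlled through Lemma~\ref{lem:ybound} when assembling Lemma~\ref{lem:iteration}.
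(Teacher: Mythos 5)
Your proof is correct and follows essentially the same route as the paper's: both reduce the $y$-update gap to $\innerprod{y_{i,j}^{t+1}-y_{i,j}^{t},\,x_{i,j}^{t+1}-z_{j}^{t+1}}$, split $x_{i,j}^{t+1}-z_j^{t+1}$ through the stale iterate $\tilde z_j^{t+1}$, use the dual update to get the exact $\frac{1}{\rho_i}\norm{y_{i,j}^{t+1}-y_{i,j}^t}^2$ term, and apply Young's inequality to the asynchrony cross term. One small quibble: the LHS as printed is not identically zero (its two evaluations differ in the $\mathbf{x}_i$ argument, so it duplicates the $x$-update gap of Lemma~\ref{lem:diff1-1}), but your reinterpretation of the intended statement as the $y$-update gap $l_i(\mathbf{x}_i^{t+1}, y_{i,j}^{t+1}, z_j^{t+1}) - l_i(\mathbf{x}_i^{t+1}, y_{i,j}^{t}, z_j^{t+1})$ is exactly what the paper's own proof computes.
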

\begin{proof}
From \eqref{eq:update_y} we have
\begin{align*}
	l_i(\mathbf{x}_i^{t+1}, y_{i,j}^t, z_j^{t+1}) - l_i(\mathbf{x}_i^{t}, y_{i,j}^t, z_j^{t+1}) 
	&= \innerprod{y_{i,j}^{t+1}-y_{i,j}^{t}, x_{i,j}^{t+1}-z_{j}^{t+1}} \\
	&= \innerprod{y_{i,j}^{t+1}-y_{i,j}^{t}, x_{i,j}^{t+1}-\tilde{z}_{j}^{t+1}} + \innerprod{y_{i,j}^{t+1}-y_{i,j}^{t}, \tilde{z}_{j}^{t+1}-z_{j}^{t+1}} \\
	&\leq \frac{1}{\rho_{i}}\norm{y_{i,j}^{t+1}-y_{i,j}^{t}}^2 + \frac{1}{2}\norm{y_{i,j}^{t+1} - y_{i,j}^t}^2 + \frac{1}{2}\norm{\tilde{z}_{j}^{t+1} - z_j^{t+1}}^2 \\
	&= \left( \frac{1}{\rho_{i}} + \frac{1}{2} \right)\norm{y_{i,j}^{t+1}-y_{i,j}^{t}}^2 + \frac{1}{2}\norm{\tilde{z}_{j}^{t+1} - z_{j}^{t+1}}^2.
\end{align*}
\end{proof}

\begin{cor}
If block $j$ is randomly drawn from uniform distribution, we have
\begin{equation}
\begin{split}
	&\quad \mathbb{E}_{j}[L_i(\mathbf{x}_i^{t+1}, \mathbf{y}_i^{t+1}, \mathbf{z}^{t+1})]
	- \mathbb{E}_{j}[L_i(\mathbf{x}_i^{t+1}, \mathbf{y}_i^t, \mathbf{z}^{t+1})] \\
	&\leq \frac{1}{|\mathcal{N}(i)|} \left( \frac{1}{\rho_{i}} + \frac{1}{2} \right) \sum_{j\in\mathcal{N}(i)}\norm{y_{i,j}^{t+1}-y_{i,j}^{t}}^2 + \frac{1}{2|\mathcal{N}(i)|}\sum_{j\in\mathcal{N}(i)} \norm{\tilde{z}_{j}^{t} - z_{j}^{t}}^2
\end{split}
\label{eq:diff_y_whole}
\end{equation}
\label{cor:diff1-2}
\end{cor}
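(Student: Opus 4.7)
The plan is to reduce the expected multi-block change to a convex combination of the single-block deterministic bound already established in Lemma~\ref{lem:diff1-2}, and then exploit the fact that under uniform random block selection only one term in the sum is nonzero per realisation.

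First, I would write the worker-$i$ contribution of the augmented Lagrangian in its block-separable form
\begin{equation*}
L_i(\mathbf{x}_i,\mathbf{y}_i,\mathbf{z}) = f_i(\mathbf{x}_i) + \sum_{j \in \mathcal{N}(i)} \Bigl[ \innerprod{y_{i,j},\, x_{i,j} - z_j} + \tfrac{\rho_i}{2}\norm{x_{i,j} - z_j}^2 \Bigr],
\end{equation*}
so that $L_i$ decomposes into $f_i(\mathbf{x}_i)$ plus a sum of per-block pieces of exactly the shape appearing inside $l_i$ in \eqref{eq:lem2-tmp-1}. Because $\mathbf{x}_i$ and $\mathbf{z}$ are both held fixed at their $t{+}1$ values in the two arguments of the corollary, the $f_i$ term and the quadratic pieces cannot move; only the linear dual terms indexed by the updated block may change.

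Second, I would invoke the single-block update semantics of Algorithm~\ref{alg:Asybadmm}: at epoch $t$, worker $i$ draws one index $j^\star \in \mathcal{N}(i)$ uniformly and updates only $y_{i,j^\star}$, so $y_{i,j}^{t+1} = y_{i,j}^t$ for every $j \ne j^\star$. Conditional on $j^\star$, every summand of $L_i$ except the $j^\star$-th is identical on the two sides, and the overall gap collapses to the single-block gap bounded by Lemma~\ref{lem:diff1-2}, yielding
\begin{equation*}
L_i(\mathbf{x}_i^{t+1},\mathbf{y}_i^{t+1},\mathbf{z}^{t+1}) - L_i(\mathbf{x}_i^{t+1},\mathbf{y}_i^t,\mathbf{z}^{t+1}) \;\le\; \Bigl(\tfrac{1}{\rho_i}+\tfrac{1}{2}\Bigr)\norm{y_{i,j^\star}^{t+1}-y_{i,j^\star}^t}^2 + \tfrac{1}{2}\norm{\tilde{z}_{j^\star}^{t+1} - z_{j^\star}^{t+1}}^2.
\end{equation*}

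Third, I would take expectation over the uniform choice of $j^\star$. Each block contributes with weight $1/|\mathcal{N}(i)|$, giving an arithmetic average of the right-hand side over all $j \in \mathcal{N}(i)$. Because $\norm{y_{i,j}^{t+1}-y_{i,j}^t}^2 = 0$ whenever $j$ was not the selected index, the average over the $y$-norms is harmlessly rewritten as the full sum $\sum_{j\in\mathcal{N}(i)}\norm{y_{i,j}^{t+1}-y_{i,j}^t}^2$ divided by $|\mathcal{N}(i)|$, matching the first summand of the stated bound. The staleness terms $\norm{\tilde z_j - z_j}^2$ do not depend on which block is selected and are simply averaged over $\mathcal{N}(i)$, producing the second summand.

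The only real pitfall is the bookkeeping subtlety that the two sums on the right-hand side play different roles: the $y$-sum is supported on a single random block per epoch (which is why the pre-factor $1/|\mathcal{N}(i)|$ is an \emph{average}, not merely a scaling), whereas the $z$-sum is a genuine per-block staleness drift that is present regardless of which block is picked. Once this distinction is handled, the corollary follows directly from Lemma~\ref{lem:diff1-2} by linearity of expectation.
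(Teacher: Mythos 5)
Your proposal is correct and follows the same route the paper intends: the corollary is simply Lemma~\ref{lem:diff1-2} applied to the single updated block and then averaged over the uniform choice of $j\in\mathcal{N}(i)$, which is why the paper states it without further proof. Your bookkeeping of the two sums matches the stated bound (up to the paper's own superscript typo $\tilde z_j^{t}$ vs.\ $\tilde z_j^{t+1}$ in the staleness term), so there is nothing to add.
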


\begin{lem}
	After updating $z_j^t$ to $z_j^{t+1}$, we have
	\begin{equation}
	\mathbb{E}_j[l(\mathbf{X}^{t}, \mathbf{Y}^{t}, {z}_j^{t+1})] - \mathbb{E}_j[l(\mathbf{X}^{t}, \mathbf{Y}^{t}, {z}_j^t)]
	\leq - \sum_{i \in \mathcal{N}(j)}\frac{\gamma + \rho_{i}}{|\mathcal{N}(i)|} \cdot \norm{z_{j}^{t+1} - z_{j}^{t}}^2.
	\label{eq:diff_z_whole}
	\end{equation}
	\label{lem:diff1-3}
\end{lem}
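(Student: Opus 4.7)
The plan is to exploit the strong convexity of the local subproblem that defines the $z_j$-update, translate the resulting deterministic descent into a bound on the Augmented Lagrangian $l$, and then take expectation over the random block that each worker selects.

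First, I would rewrite the update rule for $z_j^{t+1}$ from~\eqref{eq:update_z} as the minimizer over $\mathcal{X}_j$ of
\[G(z_j) := h_j(z_j) + \tfrac{\gamma}{2}\norm{z_j - z_j^t}^2 + \sum_{i\in\mathcal{N}(j)}\Bigl[\innerprod{y_{i,j}^t,\, x_{i,j}^t - z_j} + \tfrac{\rho_i}{2}\norm{x_{i,j}^t - z_j}^2\Bigr].\]
Note that $G$ is strongly convex in $z_j$: it inherits $\gamma$ from the proximal regularizer and, for any fixed $i \in \mathcal{N}(j)$, an additional $\rho_i$ from worker $i$'s quadratic. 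Discarding the remaining convex summands (i.e.\ $h_j$ and the other workers' quadratics) yields an individualized strong-convexity modulus of at least $\gamma + \rho_i$ that I want to assign to worker $i$.

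Second, combining the strong-convexity inequality evaluated at the minimizer $z_j^{t+1}$ with the subdifferential optimality condition $0 \in \partial G(z_j^{t+1})$ produces, for each $i \in \mathcal{N}(j)$, a bound of the form $G(z_j^t) - G(z_j^{t+1}) \geq (\gamma + \rho_i)\norm{z_j^{t+1} - z_j^t}^2$. Because $G$ and $l(\mathbf{X}^t,\mathbf{Y}^t,\cdot)$ differ only by the proximal regularizer $\tfrac{\gamma}{2}\norm{z_j - z_j^t}^2$ and a constant independent of $z_j$, substituting gives
\[l(\mathbf{X}^t,\mathbf{Y}^t,z_j^{t+1}) - l(\mathbf{X}^t,\mathbf{Y}^t,z_j^t) \;=\; G(z_j^{t+1}) - G(z_j^t) - \tfrac{\gamma}{2}\norm{z_j^{t+1}-z_j^t}^2,\]
and merging the residual $-\tfrac{\gamma}{2}$ term into the strong-convexity slack yields the deterministic per-worker descent $l(\mathbf{X}^t,\mathbf{Y}^t,z_j^{t+1}) - l(\mathbf{X}^t,\mathbf{Y}^t,z_j^t) \leq -(\gamma + \rho_i)\norm{z_j^{t+1}-z_j^t}^2$, attributable to each $i \in \mathcal{N}(j)$.

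Third, I would take expectation over the uniform random block choice $j_t \in \mathcal{N}(i)$ made by worker $i$. Conditional on worker $i$ not selecting block $j$ (probability $1 - 1/|\mathcal{N}(i)|$), block $z_j$ is unchanged, so the difference $l(\mathbf{X}^t,\mathbf{Y}^t,z_j^{t+1}) - l(\mathbf{X}^t,\mathbf{Y}^t,z_j^t)$ is zero; conditional on $i$ selecting $j$ (probability $1/|\mathcal{N}(i)|$), the deterministic bound above applies. Aggregating the expected descent contributed independently by each $i \in \mathcal{N}(j)$ produces the claimed inequality
\[\mathbb{E}_j[l(\mathbf{X}^t,\mathbf{Y}^t,z_j^{t+1})] - \mathbb{E}_j[l(\mathbf{X}^t,\mathbf{Y}^t,z_j^t)] \;\leq\; -\sum_{i\in\mathcal{N}(j)}\frac{\gamma + \rho_i}{|\mathcal{N}(i)|}\norm{z_j^{t+1} - z_j^t}^2.\]

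The main obstacle is the coefficient bookkeeping in the first two steps. A naive application of strong convexity to $G$ yields the single aggregate modulus $\gamma + \sum_{i'\in\mathcal{N}(j)}\rho_{i'}$, which cannot be distributed into the per-worker form $(\gamma+\rho_i)/|\mathcal{N}(i)|$ required here; one must instead extract exactly the right fraction of the modulus for each worker, carefully track the factor $\tfrac{1}{2}$ introduced by the quadratic expansion, and absorb the residual $-\tfrac{\gamma}{2}$ arising from the proximal regularizer, so that the per-worker deterministic bound lands at exactly $-(\gamma+\rho_i)\norm{z_j^{t+1}-z_j^t}^2$. Once this deterministic per-worker descent is established, the probabilistic aggregation over independent worker block-choices in the last step is routine.
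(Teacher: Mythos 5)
Your overall plan (strong convexity of the $z_j$-subproblem, then expectation over block selection) points in the right direction, but the central step is not actually carried out, and the mechanism you propose for it does not work. You want, for \emph{each} $i\in\mathcal{N}(j)$, a separate deterministic descent $l(\mathbf{X}^t,\mathbf{Y}^t,z_j^{t+1})-l(\mathbf{X}^t,\mathbf{Y}^t,z_j^t)\leq -(\gamma+\rho_i)\norm{z_j^{t+1}-z_j^t}^2$, obtained by "discarding the remaining convex summands" of $G$, and you then \emph{sum} these bounds over $i$. That is double counting: the left-hand side is a single number, and from a family of lower bounds on one descent quantity you may take the maximum, not the sum. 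You flag the difficulty yourself ("the aggregate modulus cannot be distributed into the per-worker form") but the proposed fix --- "extract exactly the right fraction of the modulus for each worker" --- is precisely the missing argument, and no recipe is given. The paper avoids the problem by never forming an aggregate modulus at all: it decomposes $l(\mathbf{X},\mathbf{Y},z_j)$ into $h_j(z_j)$ plus the per-worker summands $\innerprod{y_{i,j},x_{i,j}-z_j}+\frac{\rho_i}{2}\norm{x_{i,j}-z_j}^2$, applies the strong-convexity (quadratic-expansion) inequality to each worker's summand \emph{individually} (each is $\rho_i$-strongly convex, contributing $-\frac{\rho_i}{2}\norm{z_j^{t+1}-z_j^t}^2$ additively), handles $h_j$ via its subgradient together with the variational optimality condition of \eqref{eq:update_z} (which also supplies the $-\gamma$ term), and observes that the cross inner products cancel when the two bounds are added. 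That additive decomposition is what makes each $\rho_i$ appear exactly once with the right sign; it is the key idea your proposal lacks.

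Two further points. First, your expectation step asserts that if worker $i$ does not select block $j$ then $z_j$ is unchanged; this ignores the other workers in $\mathcal{N}(j)$, who may select and update block $j$, and it also presupposes the per-worker additive attribution that is exactly the unresolved step above. Second, even granting your strong-convexity inequality, the minimizer bound gives $G(z_j^t)-G(z_j^{t+1})\geq\frac{\mu}{2}\norm{z_j^{t+1}-z_j^t}^2$ with the factor $\frac{1}{2}$, so your per-worker constant would come out as $(\gamma+\rho_i)/2$ rather than $\gamma+\rho_i$; you note this factor must be "carefully tracked" but do not show how it is recovered. (To be fair, the paper's own arithmetic lands at $\frac{\gamma+2\rho_i}{2|\mathcal{N}(i)|}$ in the last display of its proof, which does not literally match the constant in the lemma statement either; but the structure of its argument --- per-summand strong convexity plus the optimality condition for $h_j$ --- is the part your proposal needs and does not supply.)
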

\begin{proof}
We begin our proof by analyzing the block $j$. Let
	\[
	l(\mathbf{X}, \mathbf{Y}, {z}_j):= h_j(z_j) + \sum_{i\in\mathcal{N}(j)} \innerprod{y_{i,j}, x_{i,j} - z_j} + \sum_{i\in\mathcal{N}(j)} \frac{\rho_{i}}{2}\norm{x_{i,j} - z_j}^2.
	\]
Firstly, it is clear that $\innerprod{y_{i,j}, x_{i,j}-z_j} + \rho_{i}\norm{x_{i,j}-z_j}^2$ is a quadratic function and thus strongly convex. Then, we have:
\begin{align*}
&\quad \sum_{i\in \mathcal{N}(j)}\innerprod{y_{i,j}^{t}, x_{i,j}^{t}-z_j^{t+1}} + \frac{\rho_{i}}{2}\norm{x_{i,j}^{t}-z_j^{t+1}}^2
- \sum_{i\in \mathcal{N}(j)}\innerprod{y_{i,j}^{t}, x_{i,j}^{t}-z_j^{t}} - \frac{\rho_{i}}{2}\norm{x_{i,j}^{t}-z_j^{t}}^2 \\
&\leq \innerprod{-y_{i,j}^{t} - \rho_i(x_{i,j}^{t}-z_j^{t+1}), z_{j}^{t+1}-z_{j}^t} - \sum_{i\in\mathcal{N}(j)}\frac{\rho_i}{2}\norm{z_{j}^{t+1} - z_j^t}^2.
\end{align*}
By the optimality in \eqref{eq:update_z}, we have
\begin{align*}
	\innerprod{p_j^{t+1} - \sum_{i\in\mathcal{N}(j)} y_{i,j}^{t} + \rho_i(z_j^{t+1} - x_{i,j}^{t}) + \gamma(z_j^{t+1} - z_j^t), z_j^{t+1} - z_j^{t}} \leq 0,
\end{align*}
where $p_j^{t+1} \in \partial h_j(z_j^{t+1})$ is a subgradient. By convexity of $h_j$, we have
\begin{align*}
	h_j(z_j^{t+1}) -h_j(z_j^{t}) &\leq  \innerprod{p_j^{t+1}, z_j^{t+1} - z_j^{t}} \\
	&\leq \innerprod{\sum_{i\in\mathcal{N}(j)} y_{i,j}^{t} - \rho_i(z_j^{t+1} - x_{i,j}^{t}) - \gamma(z_j^{t+1} - z_j^t), z_j^{t+1} - z_j^{t}}
\end{align*}
Therefore, by taking expectation on $j$, we have
\begin{align*}
	&\quad \mathbb{E}_j \left[l(\mathbf{X}^{t}, \mathbf{Y}^{t}, {z}_j^{t+1})+\frac{\gamma}{2}\norm{z_{j}^{t+1}-z_j^t}^2 \right] - \mathbb{E}_j[l(\mathbf{X}^{t}, \mathbf{Y}^{t}, {z}_j^t)] \\
	&\leq \innerprod{- \sum_{i\in\mathcal{N}(j)}\frac{1}{|\mathcal{N}(i)|}(y_{i,j}^{t} - \rho_i(x_{i,j}^{t}-z_j^{t+1})), z_{j}^{t+1}-z_{j}^t} - \sum_{i\in\mathcal{N}(j)}\frac{\rho_i}{2|\mathcal{N}(i)|}\norm{z_{j}^{t+1} - z_j^t} \\
	&\quad + \innerprod{\sum_{i\in\mathcal{N}(j)} \frac{1}{|\mathcal{N}(i)|}[y_{i,j}^{t} - \rho_i(z_j^{t+1} - x_{i,j}^{t}) - \gamma(z_j^{t+1} - z_j^t)], z_j^{t+1} - z_j^{t}} \\
	&= -\sum_{i \in \mathcal{N}(j)}\frac{\gamma + 2\rho_{i}}{2 |\mathcal{N}(i)|} \cdot \norm{z_{j}^{t+1} - z_{j}^{t}}^2.
\end{align*}
which proves the lemma.
\end{proof}

We now proceed to prove Lemma~\ref{lem:iteration}. From Corollary~\ref{cor:diff1-1}--\ref{cor:diff1-2} and Lemma~\ref{lem:diff1-3}, we have three upper bounds when updating $x_{i,j}^t$, $y_{i,j}^t$ and $z_{j}^t$, respectively, and we observe that the sign of $\norm{x_{i,j}^t-z_j^t}$ can be negative by assuming $\rho_i \geq 3L_{i,j}$, and similarly for $\norm{x_{i,j}^{t+1} - \tilde{z}_j^t}$ by assuming $\rho_i \geq 4L_{i,j} \geq 0$. Therefore, let
\[ \rho_i - 4 \max_{j \in \mathcal{N}(i)} L_{i,j} \geq 0,\]
and then we can guarantee that the efficients for all $(i,j) \in \mathcal{E}$, the coefficients for $\norm{x_{i,j}^t-z_j^t}$ and $\norm{x_{i,j}^{t+1} - \tilde{z}_j^t}$ are always negative.

Then, the major challenge is to make the coefficient of $\norm{z_{j}^{t+1} - z_{j}^t}$ be negative, and we attempt to make it as follows:
\begin{align*}
	\frac{1}{|\mathcal{N}(i)|}\sum_{j \in \mathcal{N}(i)} \left( \frac{1}{\rho_i} + \frac{1}{2} \right)\norm{y_{i,j}^{t+1} - y_{i,j}^{t}}^2 
	&\leq \frac{1}{|\mathcal{N}(i)|}\sum_{j \in \mathcal{N}(i)} \left( \frac{1}{\rho_i} + \frac{1}{2} \right)L_{i,j}^2(T_{i,j}+1)\sum_{t'=0}^{T_{i,j}}\norm{z_j^{t-t'} - z_j^{t-t'-1}}^2, \\
	\frac{1}{|\mathcal{N}(i)|}\sum_{j \in \mathcal{N}(i)} \frac{4L_{i,j}+\rho_i+1}{2} \norm{\tilde{z}_{j}^{t+1} - z_{j}^{t+1}}^2
	&\leq \frac{1}{|\mathcal{N}(i)|}\sum_{j \in \mathcal{N}(i)} \frac{4L_{i,j}+\rho_i+1}{2}\cdot T_{i,j}\sum_{t'= 0}^{T_{i,j}-1}\norm{z_j^{t+1-t'} - z_j^{t-t'} }^2.
\end{align*}
We now combine equations \eqref{eq:diff_x_whole}, \eqref{eq:diff_y_whole} and \eqref{eq:diff_z_whole}, and sum over all workers $i$:
\begin{align*}
&\quad\ \  \mathbb{E}_j[L(\mathbf{X}^{t+1}, \mathbf{Y}^{t+1}, \mathbf{z}^{t+1})] - \mathbb{E}_j[L(\mathbf{X}^t, \mathbf{Y}^t, \mathbf{z}^t)] \\
&\leq - \sum_{(i, j) \in \mathcal{E}} \frac{1}{|\mathcal{N}(i)|} \left( \frac{\rho_{i}}{2}- \frac{4L_{i,j}}{2} \right)\norm{x_{i,j}^{t+1}-\tilde{z}_j^{t+1}}^2 - \sum_{(i, j) \in \mathcal{E}} \frac{1}{|\mathcal{N}(i)|} \left(\frac{\rho_{i}}{2} - \frac{3L_{i,j}}{2} \right) \norm{x_{i,j}^{t} - {z}_j^{t+1} }^2 \\
&\quad + \sum_{(i, j) \in \mathcal{E}} \frac{4L_{i,j}+\rho_{i}+1}{2|\mathcal{N}(i)|}\cdot \norm{\tilde{z}_j^{t+1} - z_j^{t+1}}^2 + \frac{1}{|\mathcal{N}(i)|} \left( \frac{1}{\rho_{i}} + \frac{1}{2} \right) \norm{y_{i,j}^{t+1}-y_{i,j}^{t}}^2 - \sum_{(i,j)\in\mathcal{E}}\frac{\gamma + \rho_{i}}{|\mathcal{N}(i)|} \cdot \norm{z_{j}^{t+1} - z_{j}^{t}}^2 \\
&\leq - \sum_{(i, j) \in \mathcal{E}} \frac{\rho_i - 4L_{i,j}}{2|\mathcal{N}(i)|} (\norm{x_{i,j}^{t+1}-\tilde{z}_j^{t+1}}^2 + \norm{x_{i,j}^{t} - {z}_j^{t+1} }^2) - \sum_{(i,j)\in\mathcal{E}}\frac{\gamma + \rho_{i}}{|\mathcal{N}(i)|} \cdot \norm{z_{j}^{t+1} - z_{j}^{t}}^2 \\
&\quad + \sum_{(i,j)\in\mathcal{E}} \frac{1}{|\mathcal{N}(i)|} \left( \frac{1}{\rho_i} + \frac{1}{2} \right)L_{i,j}^2(T_{i,j}+1)\sum_{t'=0}^{T_{i,j}}\norm{z_j^{t+1-t'} - z_j^{t-t'}}^2 \\
&\quad + \sum_{(i,j)\in\mathcal{E}}\frac{4L_{i,j}+\rho_i+1}{2|\mathcal{N}(i)|}\cdot T_{i,j}\sum_{t'= 0}^{T_{i,j}-1}\norm{z_j^{t+1-t'} - z_j^{t-t'} }^2.
\end{align*}
By taking the telescope sum for $t=0, \ldots, T-1$, we have
\begin{align*}
&\quad\ \  \mathbb{E}_j[L(\mathbf{X}^{T}, \mathbf{Y}^{T}, \mathbf{z}^{T})] - \mathbb{E}_j[L(\mathbf{X}^0, \mathbf{Y}^0, \mathbf{z}^0)] \\
&\leq - \sum_{t=0}^{T-1}\sum_{(i, j) \in \mathcal{E}} \frac{\rho_i - 4L_{i,j}}{2|\mathcal{N}(i)|} (\norm{x_{i,j}^{t+1}-\tilde{z}_j^{t+1}}^2 + \norm{x_{i,j}^{t} - {z}_j^{t+1} }^2) - \sum_{t=0}^{T-1}\sum_{(i,j)\in\mathcal{E}}\frac{\gamma + \rho_{i}}{|\mathcal{N}(i)|} \cdot \norm{z_{j}^{t+1} - z_{j}^{t}}^2 \\
&\quad + \sum_{t=0}^{T-1}\sum_{(i,j)\in\mathcal{E}} \left(\frac{L_{i,j}^2(T_{i,j}+1)^2}{|\mathcal{N}(i)|} \left( \frac{1}{\rho_i} + \frac{1}{2} \right)  +  \frac{(4L_{i,j}+\rho_i+1)T_{i,j}^2}{2|\mathcal{N}(i)|} \right) \norm{z_j^{t+1} - z_j^{t}}^2 \\
&\leq - \sum_{t=0}^{T-1}\sum_{(i, j) \in \mathcal{E}}\beta_{i} (\norm{x_{i,j}^{t+1}-\tilde{z}_j^{t+1}}^2 + \norm{x_{i,j}^{t} - {z}_j^{t+1} }^2) - \sum_{t=0}^{T-1}\sum_{(i,j)\in\mathcal{E}} \alpha_{j} \norm{z_{j}^{t+1} - z_{j}^{t}}^2,
\end{align*}
where
\begin{align*}
	\alpha_j &:= (\gamma + \rho_i) - \sum_{i\in\mathcal{N}(j)}\left(\frac{1}{\rho_i} + \frac{1}{2} \right) L_{i,j}^2 (T_{i,j}+1)^2  - \sum_{i \in \mathcal{N}(j)} \frac{(4L_{i,j}+\rho_i+1)T_{i,j}^2}{2}, \\
	\beta_i &:= \frac{\rho_i - \max_{j \in \mathcal{N}(i)} 4L_{i,j}}{2|\mathcal{N}(i)|}.
\end{align*}
By making $\alpha_{j} > 0$ and $\beta_{i} > 0$ for all $(i,j) \in \mathcal{E}$, we prove the lemma.

\section{Proof of Lemma~\ref{lem:boundedbelow}}
\begin{proof}
From Lipschitz continuity assumption, we have.
\begin{equation*}
\begin{split}
f_i(\mathbf{z}^{t+1}) &\leq f_i(\mathbf{x}_i^{t+1}) + \sum_{j \in \mathcal{N}(i)} \innerprod{\nabla_j f_i(\mathbf{x}_i^{t+1}), z_j^{t+1} - x_{i,j}^{t+1}} 
+ \sum_{j \in \mathcal{N}(i)} \frac{L_{i,j}}{2} \norm{x_{i,j}^{t+1} - z_j^{t+1}}^2 \\
&= f_i(\mathbf{x}_i^{t+1}) + \sum_{j \in \mathcal{N}(i)}  \innerprod{\nabla_j f_i(\mathbf{x}_i^{t+1}) - \nabla_j f_i(\mathbf{z}^{t+1}), z_j^{t+1} - x_{i,j}^{t+1}} \\
&\quad +  \sum_{j \in \mathcal{N}(i)}  \innerprod{\nabla_j f_i(\mathbf{z}^{t+1}), z_j^{t+1} - x_{i,j}^{t+1}} + \sum_{j \in \mathcal{N}(i)} \frac{L_{i,j}}{2} \norm{x_{i,j}^{t+1} - z_j^{t+1}}^2 \\
&\leq f_i(\mathbf{x}_i^{t+1}) +  \sum_{j \in \mathcal{N}(i)} \innerprod{\nabla_j f_i(\mathbf{z}^{t+1}), z_j^{t+1} - x_{i,j}^{t+1}} + \sum_{j \in \mathcal{N}(i)} \frac{3L_{i,j}}{2} \norm{x_{i,j}^{t+1} - z_j^{t+1}}^2
\end{split}
\end{equation*}
Now we have
\begin{eqnarray*}
L(\mathbf{X}^{t+1}, \mathbf{z}^{t+1}, \mathbf{Y}^{t+1}) &=& h(\mathbf{z}^{t+1}) + \sum_{i=1}^N f_i(\mathbf{x}_i^{t+1}) + \sum_{j \in \mathcal{N}(i)} \innerprod{y_{i,j}^{t+1}, x_{i,j}^{t+1} - z_{j}^{t+1}} + \frac{\rho_{i}}{2} \norm{x_{i,j}^{t+1} - z_{j}^{t+1}}^2 \\
&\geq&h(\mathbf{z}^{t+1})+ \sum_{i=1}^N f_i(\mathbf{z}^{t+1}) +  \sum_{(i,j) \in \mathcal{E}} \innerprod{\nabla_j f_i(\tilde{\mathbf{z}}^{t+1}) - \nabla_j f_i(\mathbf{z}^{t+1}), z_j^{t+1} - x_{i,j}^{t+1}} \nonumber \\
&& + \sum_{(i,j) \in \mathcal{E}} \frac{\rho_{i}-3L_{i,j}}{2} \norm{x_{i,j}^{t+1} - z_j^{t+1}}^2 \\
&\geq& h(\mathbf{z}^{t+1})+ \sum_{i=1}^N f_i(\mathbf{z}^{t+1}) + \sum_{(i,j) \in \mathcal{E}} \frac{\rho_{i}-3L_{i,j}}{2} \norm{x_{i,j}^{t+1} - z_j^{t+1}}^2  \nonumber \\
&& -\sum_{(i,j) \in \mathcal{E}} L_{i,j}\norm{\tilde{\mathbf{z}}^{t+1} -\mathbf{z}^{t+1}} \norm{\mathbf{z}^{t+1} - \mathbf{x}_i^{t+1}} \\
&\geq& h(\mathbf{z}^{t+1})+ \sum_{i=1}^N f_i(\mathbf{z}^{t+1}) + \sum_{(i,j) \in \mathcal{E}} \left( \frac{\rho_{i}-4L_{i,j}}{2} \norm{x_{i,j}^{t+1} - z_j^{t+1}}^2 - \frac{L_{i,j}}{2}\norm{\tilde{\mathbf{z}}^{t+1} -\mathbf{z}^{t+1}}^2 \right)\\
&\geq& h(\mathbf{z}^{t+1})+ \sum_{i=1}^N f_i(\mathbf{z}^{t+1}) - \sum_{(i,j) \in \mathcal{E}} \frac{L_{i,j}}{2}\norm{\tilde{\mathbf{z}}^{t+1} -\mathbf{z}^{t+1}}^2 \nonumber \\
&\geq& \underline{f} - \mathrm{diam}^2(\mathcal{X}) \sum_{(i,j) \in \mathcal{E}} \frac{L_{i,j}}{2} > -\infty.
\end{eqnarray*}
\end{proof}

\subsection{Proof of Theorem~\ref{thm:convergence}}

\begin{proof}
From Lemma~\ref{lem:iteration}, we must have, as $t \to \infty$,
\begin{equation}
	x_{i,j}^{t+1} - \tilde{z}_{j}^{t+1} \to 0, \quad z_{j}^{t+1} - z_{j}^{t} \to 0, \quad x_{j}^{t} - z_{j}^{t+1} \to 0,\quad \forall (i, j) \in \mathcal{E}.
\end{equation}
Given Lemma~\ref{lem:ybound}, we have $y_{i,j}^{t+1} - y_{i,j}^{t} \to 0$. Since
\[
\norm{x_{i,j}^{t+1} - x_{i,j}^{t}} \leq \norm{x_{i,j}^{t+1} - \tilde{z}_{j}^{t+1}} + \norm{x_{j}^{t} - z_{j}^{t+1}} + \norm{\tilde{z}_{j}^{t+1} - z_{j}^{t+1}} \to 0,
\]
which proves \eqref{eq:kkt-3}, the first part.

For the second part, we have the following inequality from the optimality condition of \eqref{eq:update_z}:
\begin{align}
	0 \in \partial h_j(z_j^{t+1}) - \sum_{i\in \mathcal{N}(j)} \left( y_{i,j}^{t+1} + \rho_{i}(x_{i,j}^{t+1} - z_j^{t+1}) + \gamma (z_j^t - z_j^{t+1}) \right).
\end{align}
From \eqref{eq:lim_z} and \eqref{eq:kkt-3}, we have
\begin{equation}
	0 \in \partial h_j(z_j^{*}) - \sum_{i\in \mathcal{N}(j)} y_{i,j}^{*},
\end{equation}
which proves \eqref{eq:kkt-2}. Finally, from the optimality condition in \eqref{eq:update_z}, we have \eqref{eq:update_x} which implies \eqref{eq:kkt-1}, the second part of the theorem.

We now turn to prove the last part. Let $L'(\mathbf{X}, \mathbf{Y}, \mathbf{z}):= L(\mathbf{X}, \mathbf{Y}, \mathbf{z}) - h(\mathbf{z})$, which excludes $h(\mathbf{z})$ from $L(\mathbf{X}, \mathbf{Y}, \mathbf{z})$. Then, we have
\begin{align*}
	z_j - \nabla_{z_j} l'(\mathbf{X}, \mathbf{Y}, z_j) &= z_j - \sum_{i \in \mathcal{N}(j)} y_{i,j} - \sum_{i \in \mathcal{N}(j)} \rho_i (x_{i,j} - z_j)\\
	&= z_j - \sum_{i \in \mathcal{N}(j)} \rho_i (z_j - x_{i,j} - \frac{y_{i,j}}{\rho_i}).
\end{align*}
Therefore, we have
\begin{align}
	\norm{z_j^t - \prox{h}(z_j^t - \nabla_{z_j} l'(\mathbf{X}^t, \mathbf{Y}^t, z_j^t))}
	&\leq \norm{z_j^t - z_j^{t+1} + z_j^{t+1} - \prox{h}(z_j^t - \nabla_{z_j} l'(\mathbf{X}^t, \mathbf{Y}^t, z_j^t))} \nonumber \\
	&\leq \norm{z_j^t - z_j^{t+1}} + \norm{z_j^{t+1} - \prox{h}(z_j - \sum_{i \in \mathcal{N}(j)} \rho_i (z_j^t - x_{i,j}^t - \frac{y_{i,j}^t}{\rho_i}))} \nonumber  \\
	&\leq \norm{z_j^t - z_j^{t+1}} + \lVert \prox{h}(z_j^{t+1} - \sum_{i\in\mathcal{N}(j)}\rho_i(z_j^{t+1} - x_{i,j}^{t} - \frac{y_{i,j}^{t}}{\rho_i}) + \gamma(z_j^{t+1}-  z_j^{t})) \nonumber  \\
	&\quad\quad\quad\quad\quad\quad\quad\quad\quad - \prox{h}(z_j^t - \sum_{i \in \mathcal{N}(j)} \rho_i (z_j^t - x_{i,j}^t - \frac{y_{i,j}^t}{\rho_i})) \rVert \label{eq:thm-3-1}\\
	&\leq \left(2+\gamma+\sum_{i\in\mathcal{N}(j)}\rho_i \right)\norm{z_j^t - z_j^{t+1}}, \label{eq:thm-3-2}
\end{align}
where \eqref{eq:thm-3-1} is from the optimality in \eqref{eq:update_z} as
\begin{align*}
	z_j^{t+1} = \prox{h}(z_j^{t+1} - \sum_{i\in\mathcal{N}(j)}\rho_i(z_j^{t+1} - x_{i,j}^{t} - \frac{y_{i,j}^{t}}{\rho_i}) + \gamma(z_j^{t+1}-  z_j^{t})),
\end{align*}
and \eqref{eq:thm-3-2} is from the firm nonexpansiveness property of proximal operator. Then, by the update rule of $x_{i,j}^{t+1}$, if $x_{i,j}$ is selected to update at epoch $t$, we have
\begin{align*}
\norm{\nabla_{x_{i,j}} L(\mathbf{X}^t, \mathbf{Y}^t, \mathbf{z}^t)}^2 &= \norm{\nabla_j f_i(\mathbf{x}_i^t) + \rho_{i}(x_{i,j}^t - z_j^t + \frac{y_{i,j}^t}{\rho_{i}})}^2 \\
&= \norm{\nabla_j f_i(\mathbf{x}_i^t) - \nabla_j f_i(\tilde{\mathbf{z}}^{t})+(y_{i,j}^t - y_{i,j}^{t-1}) + \rho_{i}(\tilde{z}_{j}^{t} - z_j^t)}^2 \\
&\leq 3\norm{\nabla_j f_i(\mathbf{x}_i^t) - \nabla_j f_i(\tilde{\mathbf{z}}^{t})}^2 + 3\norm{y_{i,j}^t - y_{i,j}^{t-1}}^2 + 3\norm{\rho_{i}(\tilde{z}_{j}^{t} - z_j^t)}^2 \\
&\leq 3L_{i,j}^2\norm{x_{i,j}^t- \tilde{z}_{j}^{t}}^2 + 3\norm{y_{i,j}^t - y_{i,j}^{t-1}}^2 + 3\rho_{i}^2\norm{(\tilde{z}_{j}^{t} - z_j^t)}^2 \\
&\leq 3(L_{i,j}^2 + \rho_i^2)\norm{x_{i,j}^t- \tilde{z}_{j}^{t}}^2 + 3\rho_{i}^2\norm{(\tilde{z}_{j}^{t} - z_j^t)}^2,
\end{align*}
which implies that there must exist two positive constant $\sigma_1 > 0$ and $\sigma_2 > 0$ such that
\begin{equation}
	\sum_{(i,j)\in \mathcal{E}}\norm{\nabla_{x_{i,j}} L(\mathbf{X}^t, \mathbf{Y}^t, \mathbf{z}^t)}^2 \leq \sum_{(i,j)\in \mathcal{E}} \sigma_1 \norm{x_{i,j}^t- \tilde{z}_{j}^{t}}^2 +  \sum_{(i,j)\in \mathcal{E}} \sigma_2 \sum_{t'=0}^{T_{i,j}-1} \norm{z_j^{t-t'}-z_j^{t-t'-1}}.
	\label{eq:thm-3-3}
\end{equation}
The last step is to estimate $\norm{x_{i,j}^t - z_j^t}$, which can be done as follows:
\begin{align}
	\norm{x_{i,j}^t - z_{j}^t}^2 &\leq \norm{x_{i,j}^t - \tilde{z}_{j}^t}^2 + \norm{\tilde{z}_{j}^t - z_{j}^t}^2 \\
	\sum_{(i,j)\in \mathcal{E}} \norm{x_{i,j}^t - z_{j}^t}^2 &\leq \sum_{(i,j)\in \mathcal{E}}(\norm{x_{i,j}^t - \tilde{z}_{j}^t}^2 + \norm{\tilde{z}_{j}^t - z_{j}^t}^2 )
	\label{eq:thm-3-4}
\end{align}
Combining \eqref{eq:thm-3-2}, \eqref{eq:thm-3-3} and \eqref{eq:thm-3-4}, and summing up $t=0, \ldots, T-1$, we have
\begin{align}
	\sum_{t=0}^{T-1} P(\mathbf{X}^t, \mathbf{Y}^t, \mathbf{z}^t) &\leq \sum_{t=0}^{T-1} \sum_{(i,j)\in\mathcal{E}} \sigma_3 \norm{x_{i,j}^t - \tilde{z}_j^t}^2 + \sigma_4 T_{i,j}\norm{z_j^{t+1} - z_j^{t}}^2.
\end{align}
From Lemma~\ref{lem:iteration}, we have
\begin{align}
&\quad\quad L(\mathbf{X}^{T}, \mathbf{Y}^{T}, \mathbf{z}^{T}) - L(\mathbf{X}^0, \mathbf{Y}^0, \mathbf{z}^0) \\
&\leq - \sum_{t=0}^{T-1}\sum_{(i, j) \in \mathcal{E}}\beta_{i} (\norm{x_{i,j}^{t+1}-\tilde{z}_j^{t+1}}^2 + \norm{x_{i,j}^{t} - {z}_j^{t+1} }^2) - \sum_{t=0}^{T-1}\sum_{(i,j)\in\mathcal{E}} \alpha_{j} \norm{z_{j}^{t+1} - z_{j}^{t}}^2 \\
&\leq - \sum_{t=0}^{T-1}\sum_{(i, j) \in \mathcal{E}}\delta_1 \norm{x_{i,j}^{t+1}-\tilde{z}_j^{t+1}}^2 + \delta_2 \norm{z_{j}^{t+1} - z_{j}^{t}}^2,
\end{align}
where $\delta_1 := \min_{i} \beta_i$ and $\delta_2 := \min_{j} \alpha_j$.Now we can find some $C > 0$, such that the following equation hold:
\begin{align*}
	\sum_{t=0}^{T-1} P(\mathbf{X}^t, \mathbf{Y}^t, \mathbf{z}^t) &\leq C(L(\mathbf{X}^{0}, \mathbf{Y}^{0}, \mathbf{z}^{0}) - L(\mathbf{X}^T, \mathbf{Y}^T, \mathbf{z}^T)) \\
	&\leq C(L(\mathbf{X}^{0}, \mathbf{Y}^{0}, \mathbf{z}^{0}) - \underline{f}),
\end{align*}
where the last inequality we have used the fact that $L(\mathbf{X}^{t}, \mathbf{Y}^{t}, \mathbf{z}^{t})$ is lowered bounded by $\underline{f}$ for all $t$ from Lemma~\ref{lem:boundedbelow}. Let $T = T(\epsilon)$ and we have
\begin{equation}
	T(\epsilon) \leq \frac{C(L(\mathbf{X}^{0}, \mathbf{Y}^{0}, \mathbf{z}^{0}) - \underline{f})}{\epsilon},
\end{equation}
which proves the last part of Theorem~\ref{thm:convergence}.
\end{proof}

\end{document}